\newtheorem{theorem}{Theorem}[section]
\newtheorem{corollary}{Corollary}[theorem]
\newtheorem{lemma}[theorem]{Lemma}
\newtheorem{proposition}[theorem]{Prop.}
\theoremstyle{definition}
\newtheorem{definition}{Definition}[section]
\begin{document}
\title{ \vspace{-15pt}\huge Mixed Dimension Embeddings with Application to Memory-Efficient Recommendation Systems  \vspace{-10pt}
} 

% %%% Single author, or several authors with same affiliation:
% \author{%
%   \IEEEauthorblockN{Stefan M.~Moser}
%   \IEEEauthorblockA{ETH Zürich\\
%                     ISI (D-ITET)\\
%                     CH-8092 Zürich, Switzerland\\
%                     Email: moser@isi.ee.ethz.ch}
% }

%%% Several authors with up to three affiliations:
%\author{%
%  \IEEEauthorblockN{A. A. Ginart}
%  \IEEEauthorblockA{ETH Zürich\\
%                    ISI (D-ITET), ETH Zentrum\\
%                    CH-8092 Zürich, Switzerland\\
%                    Email: moser@isi.ee.ethz.ch}
%  \and
%  \IEEEauthorblockN{Albus Dumbledore and Harry Potter}
%  \IEEEauthorblockA{Hogwarts School of Witchcraft and Wizardry\\
%                    Hogwarts Castle\\ 
%                    1714 Hogsmeade, Scotland\\
%                    Email: \{dumbledore, potter\}@hogwarts.edu}
%}

%%% Many authors with many affiliations:
 \author{%
   \IEEEauthorblockN{A.A. Ginart\IEEEauthorrefmark{1},
                     Maxim Naumov\IEEEauthorrefmark{2},
                     Dheevatsa Mudigere\IEEEauthorrefmark{2},
                     Jiyan Yang\IEEEauthorrefmark{2},
                     James Zou\IEEEauthorrefmark{1}}
   \IEEEauthorblockA{\IEEEauthorrefmark{1}%
                     \small Stanford University,
                     Palo Alto, California,
                     \{tginart, jamesz\}@stanford.edu}
   \IEEEauthorblockA{\IEEEauthorrefmark{2}%
                     \small Facebook, Inc.
                     Menlo Park, California,
                      \{mnaumov, dheevatsa, chocjy\}@facebook.com}
 \vspace{-30pt}}

\maketitle

%%%%%%
%% Abstract: 
%% If your paper is eligible for the student paper award, please add
%% the comment "THIS PAPER IS ELIGIBLE FOR THE STUDENT PAPER
%% AWARD." as a first line in the abstract. 
%% For the final version of the accepted paper, please do not forget
%% to remove this comment!
%%
\begin{abstract}
 \baselineskip=9pt \footnotesize Embedding representations power machine intelligence in many applications, including recommendation systems, but they are space intensive --- potentially occupying  hundreds of gigabytes in large-scale settings. To help manage this outsized memory consumption, we explore \emph{mixed dimension embeddings}, an embedding layer architecture in which a particular embedding vector's dimension scales with its query frequency. Through theoretical analysis and systematic experiments, we demonstrate that using mixed dimensions can drastically reduce the memory usage, while maintaining and even improving the ML performance. Empirically, we show that the proposed mixed dimension layers improve accuracy by 0.1\% using half as many parameters or maintain it using 16$\times$ fewer parameters for click-through rate prediction on the Criteo Kaggle dataset. They also train over 2$\times$ faster on a GPU.
\end{abstract}

%% The paper must be self-contained. However, if you are referring to
%% a full version for checking certain proofs, please provide the
%% publically accessible location below.  If the paper is completely
%% self-contained, you can remove the following line from your
%% submission.

%\small{ \textit{A full version of this paper is accessible at: } \url{https://arxiv.org/abs/1909.11810} }

\normalsize
\vspace{-7pt}
\section{Introduction}

Embedding representations power state-of-the-art applications in diverse domains, including computer vision \cite{barz2019hierarchy,vasileva2018learning}, natural language processing \cite{shoeybi2019megatron, akbik2018contextual, liu2019roberta}, and recommendation systems \cite{cheng2016wide, Park2018, Wu2019}. It is standard practice to embed objects into $\mathbb{R}^d$ at a fixed uniform dimension (UD) $d$. When the embedding dimension $d$ is too low, the downstream statistical performance suffers  \cite{yin2018dimensionality}. When $d$ is high and the number of objects to represent is large, memory consumption becomes an issue. For example, in recommendation models, the embedding layer can make up more than $99.9\%$ of the memory it takes to store the model, and in large-scale settings, it could consume hundreds of gigabytes or even terabytes \cite{Park2018, Pi2019}. Therefore, finding innovative embedding representations that use fewer parameters while preserving statistical performance of the downstream model is an important challenge.  

Object frequencies are often heavily skewed in real-world applications. For instance, for the full MovieLens dataset, the top 10\% of users receive as many queries as the remaining 90\% and the top 1\% of items receive as many queries as the remaining 99\%. To an even greater extent, on the Criteo Kaggle dataset the top $0.0003\%$ of indices receive as many queries as the remaining $\sim$32 million. To leverage the heterogeneous object popularity in recommendation, we propose mixed dimension (MD) embedding layers, in which the dimension of a particular  object's embedding scales with that object's popularity rather than remaining uniform. Our case studies and  theoretical analysis demonstrate that MD embeddings work well because they do not underfit popular embeddings while wasting parameters on rare embeddings. Additionally, MD embeddings minimize popularity-weighted loss at test time by efficiently allocating parameters.

%for two reasons. First, by scaling embedding dimensions based on corresponding sample sizes, MD embeddings learn representations that do not underfit popular embeddings or overfit rare embeddings. Second, due to skewed popularity in deployment (i.e. test sets), MD embeddings  without wasting them on rare objects. %With our theoretical framework, we 

%\textbf{Hypothesized Mechanisms:}\begin{enumerate}
%\vspace{-10pt}
%    \item \textbf{Generalization Skew:}  Due to the popularity skew in training sets, by scaling embedding dimensions based on corresponding sample sizes, MD embeddings can \emph{learn} representations that do not underfit popular embeddings or overfit rare embeddings. We refer to this phenomena as $\emph{unequal generalization}$.

%    \item \textbf{Deployment Skew:} Second, due to the popularity skew in deployment queries (i.e. test sets), MD embeddings can more efficiently \emph{allocate} parameters in order to minimize popularity-weighted loss. e refer to this phenomena as $\emph{deployment skew}$
%\end{enumerate}
In Section 3, we introduce the proposed architecture for the embedding layer. In Section 4, we theoretically investigate MD embeddings. Our theoretical framework splits embedding-based recommendation systems into either the \emph{data-limited regime} or the \emph{memory-limited regime}, depending on the parameter budget and sample size. We prove mathematical guarantees, which demonstrate that when the frequency of categorical values is sufficiently skewed, MD embeddings are both better at matrix recovery and incur lower reconstruction distortion than UD embeddings. Our method is faster to train while requiring far less tuning than other non-uniform embedding layers. In Section 5, we demonstrate that MD embeddings improve both parameter-efficiency and training time in click through rate (CTR) prediction tasks. %On the Criteo Kaggle dataset, mixed embeddings improve accuracy by 0.1\% using half as many parameters and maintain accuracy using 16$\times$ fewer parameters. They also train over 2$\times$ faster on a GPU.

\textbf{Summary of Contributions:}

    \textbf{(1)} We propose an MD embeddings layer for recommendation systems and provide a novel, mathematical method for sizing the dimension of features with variable popularity that is fast to train, easy to tune, and performs well empirically.
    
   \textbf{(2)}   With matrix completion and factorization models, we prove that with sufficient popularity skew, mixed dimension embeddings incur lower distortion when memory-limited and generalize better when data-limited.
    
    \textbf{(3)}    For the memory-limited regime we derive the \emph{optimal} feature dimension. This dimension only depends on the feature's \emph{popularity}, the parameter budget, and the \emph{singular-value spectrum} of the pairwise interactions. 

\section{Background \& Problem Formulation}

%\vspace{-5pt}
%\paragraph{Collaborative filtering} 

We review the CTR prediction task here (more details in Appendix). Compared to canonical collaborative filtering (CF),  CTR prediction tasks include additional context that can be incorporated to predict user-item interactions.  These contextual features are expressed through sets of indices (categorical) and floating point values (continuous). These features can represent arbitrary details about the context of an on-click or personalization event. The $i$-th categorical feature can be represented by an index $x_i \in \{1,...,n_i\}$ for $i=1,...,\kappa$. In addition to $\kappa$ categorical features, we also have $s$ scalar features, together producing a dense feature vector $\textbf{x}' \in \mathbb{R}^s$. Thus, given some $(x_1,...,x_{\kappa},\textbf{x}') \in ([n_1] \times ... \times [n_{\kappa}]) \times \mathbb{R}^s$, we would like to predict $y \in \{0,1\}$, which denotes a click event in response to a particular personalized context. %(more generally we might allow $y \in \mathbb{R}$). %Common mathematical models for contextual recommendation ignore the continuous features $\mathbf{x}'$ and can inst

We  use state-of-the-art deep learning recommendation model (DLRM) \cite{naumov2019deep} as an off-the-shelf deep model. Various deep CTR prediction models, including \cite{cheng2016wide, guo2017deepfm, lian2018xdeepfm, naumov2019deep, zhou2018deepi, zhou2018deep}, are powered by memory-intensive embedding layers that utterly dwarf the rest of the model. The trade-off between the size of the embedding layer and the statistical performance seems to be an unavoidable trade-off. Generally these deep models are trained via empirical risk minimization (ERM) and back-propagation. For a given model $f_\theta$ (parameterized by $\theta$) the standard practice is to represent categorical features with some indexed embedding layer $E$. The ERM objective is then: $\min_{\theta,E} \sum_{i\in \mathcal{D}} \ell \left(f_\theta(\mathbf{x'}_i,E[(x_1,...,x_\kappa)_i]),y_i\right)$ where the sum is over all data points $\{(x_1,...,x_\kappa,\mathbf{x'})_i,y_i\}$ in the dataset and the loss function $\ell$ is taken to be cross entropy for our purposes. Usually, each categorical feature has its own independent embedding matrix: $E[(x_0,...,x_\kappa)_i] = (E^{(1)}[x_1],..., E^{(\kappa)}[x_\kappa])$.

\textbf{Related Works.} Recent works have proposed similar but substantially different techniques for non-uniform embedding architectures, particularly for the natural language processing (NLP) domain \cite{chen2018groupreduce,baevski2018adaptive}. Neither of those methods would work out-of-the-box for CTR prediction because they ignore the inherit feature-level structure in CTR that is absent in NLP. We discuss key distinctions in more detail in Appendix.

Other approaches propose neural architecture search (NAS) for RecSys embedding layers is proposed in \cite{joglekar2020neural}, where generic reinforcement learning algorithms are used to architect the embedding layer. In contrast to computationally expensive NAS, we show that the architecture search over non-uniform embedding layers can be distilled into tuning a \emph{single} hyper-parameter and does not require the heavy-machinery of NAS. This simplification in model search is only possible due to our theoretical framework. Furthermore, in contrast to all previous works with non-uniform embeddings, we theoretically analyze our method. Moreover, past works do not empirically validate the speculated mechanisms by which their methods work. %

\section{Mixed Dimension Embedding Layer}

The MD embedding layer architecture,  $\mathbf{\bar{E}}$, consists of $k$ blocks and is defined by $2k$ matrices:
$
\mathbf{\bar{E}} = (\bar{E}^{(1)},...,\bar{E}^{(k)}, P^{(1)},...,P^{(k)})
$
with $\bar{E}^{(i)} \in \mathbb{R}^{n_i \times d_i}$ and $P^{(i)} \in \mathbb{R}^{d_i \times \bar{d}}$ for $i=1,...,k$. Together, $\bar{E}^{(i)}$ and $P^{(i)}$ form the $i$-th block. In total, there are $n = \sum_{i=1}^k n_i$ embedding vectors in the layer. We always treat embedding vectors as row vectors. The $\bar{E}^{(i)}$ can be interpreted as the MD embeddings, and the $P^{(i)}$ are projections that lift the embeddings into a \emph{base dimension} $\bar{d}$ such that $\bar{d} \geq d_i$. The entire layer can be thought of as a single $n \times \bar{d}$ block matrix for which the $i$-th block is factored at dimension $d_i$.

 %\begin{figure}
 %\vspace{-17pt}
 %\hspace{-7pt}
 % \centering
 %    \includegraphics[width=\columnwidth]{}
 %\caption{ \footnotesize Fig. 1a (left): Matrix Architecture for Uniform and MD Embedding Layers. Fig. 1b (right) Embedding parameters for $26$ categorical features allocated at different temperatures ($\alpha$) for the same parameter budget on the Criteo dataset. The dimensions are rounded to powers of $2$. Base dimension $\bar{d}$ corresponds to index $1$. }
 %\vspace{-24pt}
 %\label{fig:emb_layer}
%\end{figure}

%\begin{figure*}[h!]
% \begin{center}
%  \begin{subfigure}[l]{0.4\textwidth}
%  \vspace{-10pt}
%    \includegraphics[width=\textwidth]{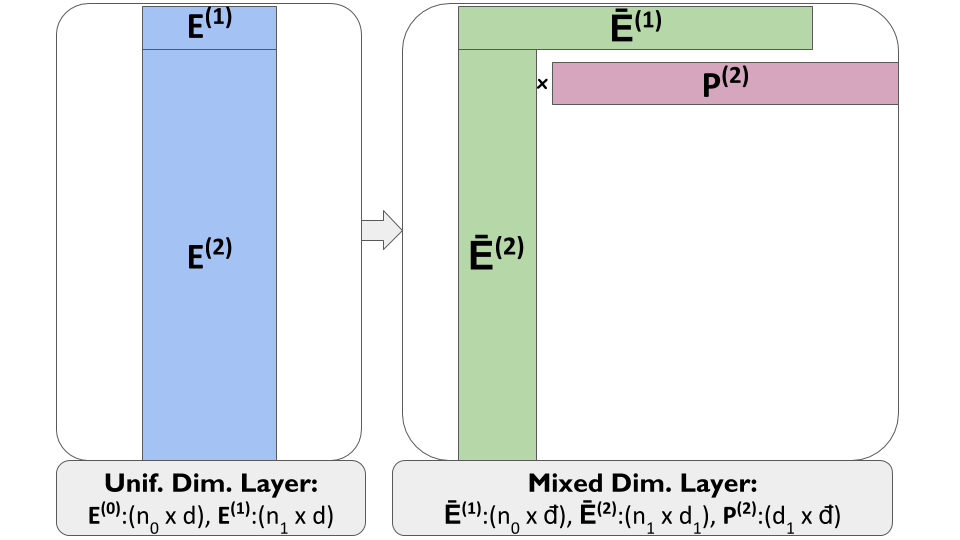}
%   \label{fig:dims_and_pop}
%   \vspace{-25pt}
%  \end{subfigure}
%\caption{\footnotesize Fig. 1a (left): Matrix Architecture for UD and MD Embedding Layers. Fig. 1b (right) Embedding parameters for $26$ categorical features allocated at different temperatures ($\alpha$) for the same parameter budget on the Criteo dataset. Higher temperatures results in higher dimensions for popular embeddings and $\alpha=0$ is UD. See Section 4 and Alg.1 for more details concerning the assignment scheme. The dimensions are rounded to powers of $2$. Base dimension $\bar{d}$ corresponds to index $1$. \vspace{-27pt}
%} 
% \end{center}
%\end{figure*}

\begin{figure}[h!]

  %\vspace{-10pt}
    \includegraphics[width=0.47\textwidth]{}
       %\vspace{-7pt}
   \label{fig:dims_and_pop}
   %\vspace{-25pt}
\caption{\footnotesize Matrix Architecture for UD and MD Embedding Layers. %Fig. 1b (right) Embedding parameters for $26$ categorical features allocated at different temperatures ($\alpha$) for the same parameter budget on the Criteo dataset. Higher temperatures results in higher dimensions for popular embeddings and $\alpha=0$ is UD. See Section 4 and Alg.1 for more details concerning the assignment scheme. The dimensions are rounded to powers of $2$. Base dimension $\bar{d}$ corresponds to index $1$. \vspace{-27pt}
} 
\vspace{-5pt}
\end{figure}

Forward propagation for a MD embedding layer is performed by indexing an embedding vector and then projecting it. For example,  compute $P^{(1)}\bar{E}^{(1)}_{\ell}$ for the $\ell$-th vector in the first block. Downstream models based on a MD embedding layer should be sized with respect to $\bar{d}$. If $d_i = \bar{d}$ for any block, the projection $P^{(i)}$ is not needed and may be replaced with an identity mapping. We illustrate this along with the general matrix architecture of a two block MD embedding layer in Fig. 1.  The parameter budget (total area) consumed by UD and MD embedding layers is the same, but the parameters are allocated unevenly to different indices in the MD embeddings.  

For MD embedding layers, there are two primary architectural decisions to make: (i) \emph{\textbf{blocking:} how to block $n$ total embedding indices into $k$ blocks?} and (ii) \emph{\textbf{sizing:} how to size the embedding dimensions $\mathbf{d} = (d_1,...,d_k)$?} For large-scale CTR, $\kappa$ is generally on the order of 10 to 100. Standard embedding layers allocate $\kappa$ UD embedding matrices to these $\kappa$ features. For MD layers, it is both simple and natural to  inherit the block structure as delineated by the task itself. We let $k = \kappa$ and use the same number of MD embedding blocks as categorical features in the CTR prediction task. The MD layer satisfies $\bar E^{(i)} \in \mathbb{R}^{n_i \times d_i}$ for $i \in \{1,..., \kappa\}$. The value range for each categorical feature defines the row counts $n_i$ in the corresponding block of the MD layer. Any re-indexing can trivially be stored in a low-cost length $k$ offset vector. For the Criteo dataset, there are $\kappa=26$ distinct categorical features, so we produce a MD embedding layer with $k=26$ blocks. To get meaningful and useful blocks, this blocking scheme depends on the fact that our task has a large number of contextual features $k$, with value ranges varying from order 10 to order 10 million. Thus, even if feature values are roughly uniformly popular within each feature, the large variation in value ranges leads to a significantly skewed popularity. In contrast to CTR prediction tasks, when using word embeddings in NLP, one cannot block the mixed layer by feature because this inherent structure is absent. Thus, one needs to resort to complex blocking and sizing schemes, such as those proposed in \cite{chen2018groupreduce,baevski2018adaptive}. Furthermore, we found that accuracy significantly drops if the layer is \emph{not} blocked by feature. We hypothesize that embedding projections encode feature-level semantic information when blocking by feature. As for the question of \emph{sizing} the embedding blocks, we defer  discussion until after our theoretical analysis.

\section{Theoretical framework}

%We study the unequal generalization and deployment mechanisms through the lens of matrix completion, which we assume the reader is familiar with. Refer to Appendix C for references to the relevant literature.
%\cite{cai2010singular,candes2010matrix,candes2011tight,candes2009exact,candes2010power,chen2015completing,foucart2017biasing,gamarnik2017matrix,keshavan2010matrix,recht2011simpler,sun2016guaranteed,vishwanath2010information}.

As is standard, our theoretical analysis models CF and RecSys tasks with matrix completion and factorization (additional references and all proofs are in Appendix).

\begin{wrapfigure}{R}{0.25\textwidth}
\begin{minipage}{0.25\textwidth}
\footnotesize
$M = \begin{bmatrix}
M^{(11)}  & \dots  & M^{(1k_W)}  \\
\vdots  & \ddots & \vdots \\
M^{(k_V1)} & \dots & M^{(k_Vk_W)} 
\end{bmatrix}$
\end{minipage}
\end{wrapfigure}

 %Our analysis enables us to mathematically understand the learning phenomena as well as inform engineering practice by guiding the assignment of embedding dimensions in MD layers. 

Let $M \in \mathbb{R}^{n \times m}$, for $n \geq m$, be an unknown target matrix. Without loss of generality, we also assume $M$ has a \emph{block structure} such that $M$ is comprised of blocks $M^{(i,j)} \in \mathbb{R}^{n_i \times m_j}$ for $1 \leq i \leq k_V$ and $1 \leq j \leq k_W$. When indexing $M$, we use subscripts, as in $M_{kl}$, to denote the $kl$-th scalar entry in $M$, and superscripts in parenthesis, such as  $M^{(i,j)}$, to denote the $ij$-th block of $M$ (the comma  is often omitted in the superscript). Let $\texttt{rank}(M) = r$ and $\texttt{rank}(M^{(ij)}) = r_{ij}$. Let $\Omega \subset [n] \times [m]$ denote a sample of indices. Our observations, denoted $\mathbf{\Omega}$, act as a training set: $ \mathbf{\Omega} = \{ (k,l,M_{kl}) : (k,l) \in \Omega\}$. We say the target matrix $M$ is \emph{completed} or \emph{recovered} if recovery algorithm $\mathcal{A}$ returns $\mathcal{A}(\mathbf\Omega) = M$. We are interested in the probability of recovery event: $\mathbf{Pr}[M = \mathcal{A}(\mathbf\Omega)]$ for an algorithm $\mathcal{A}$ under a sampling model for $\Omega$. 
 Given both the block-wise structure of $M$ and the MD embeddings, it is straightforward to apply MD. The goal is to train the layer $\mathbf{\bar E}$ to represent $M$ with the block structure in $\mathbf{\bar E}$ inherited from $M$. We can train $\mathbf{\bar E}$ using stochastic gradient descent (SGD).

\textbf{Data-Limited \& Memory-Limited Regimes.} In contextual recommendation engines, there are two primary bottlenecks. In the data-limited regime, (when the number of samples is $o(nr\log n)$) the model does not have enough samples to accurately recover the preference matrix unless a popularity-based approach like MD embeddings is used. In the memory-limited regime (when the space constraint is $o(nr)$), the model has sufficient data to recover the preference matrix but not enough space for the parameters that comprise the embedding layer, which requires us to use fewer parameters than are naively required. We leave analysis of both regimes simultaneously for future work. Because large-scale CTR prediction systems can use up to order $10^9$ contextual features and constantly generate data, they are usually in the memory-limited regime.

%the constant generation of data results in systems that fall into the memory-limited regime. %(we support this empirically in Section 5). %In the data-limited regime, the concern with UD embedding is: \emph{will rare vectors generalize well?} In the memory-limited regime, the concern with UD embeddings is: \emph{are rare vectors wasting parameters?}  With the remarkable simplicity and elegance afforded by matrix completion theory, we proceed to address both of these questions in mathematically rigorous fashion.

\vspace{-3pt}
\subsection{Generalization in the Data-Limited Regime}
\vspace{-3pt}

It is common practice to study a Bernoulli sampling model for $\Omega$ \cite{candes2010power,candes2010matrix,candes2009exact,candes2011tight,sun2016guaranteed}, where each entry is revealed independently with some small probability. Below, we extend Bernoulli sampling for the proposed block-wise structure such that sampling probabilities are constant within a block.

\textbf{Definition: Block-wise Bernoulli sampling} \emph{ \small Let $\Pi \in \mathbb{R}^{k_W \times k_V}$ be a probability matrix. Let $N$ denote the expected number of observed indices in a training sample. Let $\mathbf{i}(\cdot)$ and $\mathbf{j}(\cdot)$ map each index of $M$ to the index of the block to which it belongs. Each index $(k,l)$ is independently sampled as follows:
$ \mathbf{Pr}[ (k,l) \in \Omega ] = N\Pi_{\mathbf{i}(k),\mathbf{j}(l)}/(n_{\mathbf{i}(k)} m_{\mathbf{j}(l)})$ } \vspace{3pt}

We use standard matrix completion assumptions. We show that when the training samples are sufficiently skewed, MD embeddings can recover many more matrices than UD embeddings. We use recovery of the matrix as a proxy for good generalization. For brevity, we focus on exact completion for matrices, but it is well-understood how to extend these results to approximate completion and for low-rank tensors (refer to Appendix).

We refer to any algorithm that ignores popularity as \emph{popularity-agnostic} (formalized in Appendix). Under uniform popularity, popularity-agnostic algorithms need $\Theta( rn \log n)$ samples to recover the matrix \cite{candes2010matrix}. We provide an asymptotic lower bound on the sample complexity for matrix recovery under popularity skew by \emph{any} popularity-agnostic algorithm. 
%%\vspace{-5pt}
\begin{theorem} \small \label{thm:mix}  \label{thm:neg} Let $M$ be a target matrix following the block-wise Bernoulli sampling model under probability matrix $\Pi$. Let $\varepsilon = \min\{  \min_i  \frac{1}{n_i}\sum_j \Pi_{ij},\min_j \frac{1}{m_j}\sum_i \Pi_{ij}\}$.

(1) Suppose $N= o(\frac{r}{\varepsilon}  n \log n) $. Then any algorithm that does not take popularity into account will have asymptotically zero probability of recovering $M$. 

(2) Let $C$ be some non-asymptotic constant independent of $n$. If $N \geq C (\max_{ij}\frac{r_{ij}}{\Pi_{ij}}) n \log n$, then mixed dimension factorization with SGD recovers $M$ with high probability.
\end{theorem}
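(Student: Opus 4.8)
The plan is to handle both parts through a common reduction: under block-wise Bernoulli sampling the observations decouple across the blocks of $M$, so I first compute, for each block, the expected number of revealed entries and the expected number revealed per row and per column. A direct calculation from the sampling definition gives that the expected count landing in block $(i,j)$ is $n_i m_j \cdot N\Pi_{ij}/(n_i m_j) = N\Pi_{ij}$, while the expected number of revealed entries in a fixed row of block-row $i$ is $\frac{N}{n_i}\sum_j \Pi_{ij}$, and symmetrically $\frac{N}{m_j}\sum_i \Pi_{ij}$ per column. The quantity $\varepsilon$ is exactly the minimum of these per-row and per-column rates over all blocks, so it pinpoints the most sparsely sampled row- or column-slab of $M$. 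Both halves then follow by comparing these occupancies against the classical matrix-completion thresholds.

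For part (1) I would exhibit an under-determined row or column. Take the block achieving $\varepsilon$; without loss of generality a row-slab. A Chernoff/coupon-collector union bound over its rows shows that once $N$ is below the stated budget $o(\frac{r}{\varepsilon} n \log n)$, with probability tending to one at least one row of that slab is observed in fewer than $r$ positions. Such a row is information-theoretically unrecoverable: since every row of $M$ lies in an $r$-dimensional row space, fewer than $r$ observed coordinates impose fewer than $r$ linear constraints on its coefficients, so I can construct two distinct rank-$r$ completions that agree on every observed entry but differ on that deficient row. Finally I invoke the appendix's formalization of \emph{popularity-agnostic}: such an algorithm's output law is invariant under the relabeling that interchanges these two completions, so it cannot distinguish them and must err on at least one, forcing $\mathbf{Pr}[M = \mathcal{A}(\mathbf{\Omega})] \to 0$.

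For part (2) the block decoupling does the heavy lifting. Since MD factorization blocks $\mathbf{\bar E}$ exactly as $M$ is blocked and sizes block-row $i$ with dimension $d_i \geq \max_j r_{ij}$, each block $M^{(ij)}$ is its own low-rank completion instance on the $N\Pi_{ij}$ samples falling inside it. The hypothesis $N \geq C(\max_{ij}\frac{r_{ij}}{\Pi_{ij}})n\log n$ rearranges to $N\Pi_{ij} \geq C\, r_{ij}\, n\log n$ for every block, and since $n \geq \max(n_i,m_j)$ this exceeds the standard exact-completion threshold $\Theta(r_{ij}\max(n_i,m_j)\log\max(n_i,m_j))$ for an $n_i \times m_j$ rank-$r_{ij}$ matrix. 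I would then invoke the cited guarantee that factored SGD recovers an incoherent low-rank matrix above this threshold — the factored objective has no spurious local minima and SGD escapes saddle points — applied independently to each block after a Chernoff bound upgrades the expected occupancy $N\Pi_{ij}$ to a realized one. A union bound over the $k_V k_W$ blocks then yields exact recovery of all blocks, hence of $M$, with high probability.

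The main obstacle is the per-block nonconvex recovery guarantee in part (2): transferring a convex nuclear-norm result to SGD on the factored parameterization requires the landscape and saddle-escaping machinery, and care that the standard incoherence hypotheses are inherited by each sub-block and preserved along the SGD trajectory; the projection matrices $P^{(i)}$ must also be handled so that the effective factored rank of block $(i,j)$ is exactly $d_i \geq r_{ij}$. A secondary difficulty, in part (1), is pinning down the definition of popularity-agnostic tightly enough that the indistinguishability of the two completions is genuinely forced rather than merely plausible, and confirming that the coupon-collector union bound lands at the budget claimed.
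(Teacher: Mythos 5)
Your proposal follows essentially the same route as the paper: part (1) is proved by isolating the minimum-marginal-rate slab, using a Chernoff/coupon-collector bound to find rows with fewer than $r$ observations, and arguing that a popularity-agnostic algorithm imposes no constraint beyond rank and so cannot distinguish the consistent rank-$r$ completions; part (2) is proved by per-block decoupling with expected occupancy $N\Pi_{ij}$, invoking the Sun--Luo completion guarantee on each block, and union-bounding before assembling the block factors into the MD layer under rank additivity. The one obstacle you flag in part (2) is moot: the cited guarantee of Sun and Luo (2016) is already stated for SGD on the factored (nonconvex) objective, so no transfer from a nuclear-norm result is needed, and the projections are fixed $0/1$ interleaving matrices (with block dimensions $d_w^i = \sum_j r_{ij}$ rather than $\max_j r_{ij}$) so they contribute no free parameters to worry about.
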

%\vspace{-5pt}

Thm \ref{thm:neg} states that with popularity-agnostic methods, completion of matrix $M$ is bottlenecked by the row or column with lowest popularity. It is impossible to complete rare rows at the same rank as popular rows. When popularity skew is large, the $\frac{1}{\varepsilon}$ factor greatly increases the sample size necessary to complete $M$. In contrast, MD factorization gets a significant reprieve if low-popularity blocks are also treated as low-rank, implying $\max_{ij}\frac{r_{ij}}{\Pi_{ij}} \ll \frac{r}{\varepsilon} $.

\textbf{Two block example.} The theorems above are more easily interpreted for a special case block matrix consisting of two vertically stacked matrices $M = [M^{(1)}, M^{(2)}]^T$. Let us assume block-wise popularity sampling with $\Pi_1 = 1-\epsilon$ and $\Pi_2 = \epsilon$ for small $0 < \epsilon < 1/2$, so that we can interpret $M^{(1)}$ as the popular and $M^{(2)}$ as the rare block. For illustrative purposes, assume that $r_2$ is a small constant and $r_1 \approx \frac{1}{\epsilon}$ is significantly larger. Then popularity-agnostic algorithms suffer a $\frac{1}{\epsilon^{2}}$ quadratic penalty in sample complexity due to popularity skew, whereas MD factorization only pays a $\frac{1}{\epsilon}$ factor because the rare block is completed at much lower rank. %Also, assume that rank additivity is preserved and $r=r_1+r_2$. 

\vspace{-1pt}
\subsection{Space Efficiency in the Memory-Limited Regime}
\vspace{-3pt}

To study memory-constrained deployment, we assume that we have sufficient data to complete the target matrix. We are instead constrained by a small parameter budget $B$. The goal is to optimally allocate parameters to embedding vectors such that we minimize the expected reconstruction over a non-uniformly sampled test set. Under mild assumptions this dimension allocation problem is a convex program and is amenable to closed-form solutions. We prove that the optimal dimension for a given embedding scales with that embedding's popularity in a manner that depends deeply on the spectral properties of the target matrix.

For most applications, popularity skew is present at test time as well as training time. In this setting, the natural loss metric to study is the \emph{popularity-weighted mean square error (MSE)}.

\textbf{Definition: Popularity-Weighted MSE}
\emph{ \small Let $\Pi$ be a probability matrix. Let $(k,l)$ be a test coordinate sampled according to $\Pi$. The popularity-weighted MSE is given by $L_{\Pi}(M,\hat{M}) = \mathbb{E}_{k,l}|M_{kl} - \hat{M}_{kl}|^2 = \sum_{i,j}\frac{1}{n_im_j}\Pi_{ij}||M^{(ij)} -\hat{M}^{(ij)}||_F^2$.} \vspace{3pt}

Let us now assume that the target matrix is given and that we are trying to optimally size MD embeddings layers, $W$ and $V$, with respect to popularity-weighted MSE reconstruction. We assume to have a small parameter budget, so that we cannot factor the target matrix exactly. We formulate this optimization as a convex program with linear constraints (we treat the dimensions as continuous --- this is a convex relaxation of a hard problem, see Appendix).

\textbf{Convex program for optimizing mixed dimensions:} \small
$$\min_{d_w,d_v} \left( \min_{W,V} L_{\Pi}(M,WV^T) \right) \text{ s.t. } \sum_i n_i(d_w)_i + \sum_j m_j(d_v)_j \leq B$$

\emph{where $d_w \in \mathbb{R}^{k_W}$ and $d_v \in \mathbb{R}^{k_V}$ denote the dimensions of the embedding blocks of $W$ and $V$, respectively.} 
\normalsize

We can obtain a solution using first-order conditions and Lagrange multipliers \cite{luenberger1984linear}. Our analysis reveals that under mild assumptions, the optimal dimension to popularity scaling is the functional inverse of the spectral decay of the target matrix (see Thm. \ref{thm:opt_sol}).

\begin{theorem}
\small
\label{thm:opt_sol}
Let M be a block matrix with block-wise spectral (singular value) decay given by $\sigma_{ij}(\cdot)$. Then, the optimal  embedding dimensions for MD layers W and V are given by:  $(d^*_{w})_i = \sum_{j} d^*_{ij}$, $(d_{v}^*)_j = \sum_{i} d^*_{ij}$, where 
$d_{ij}^* = \sigma_{ij}^{-1}\left( \sqrt{\lambda (n_i + m_j)(n_im_j)\Pi_{ij}^{-1}} \right)$ and $\sum_{ij}(n_i +m_j)d_{ij}^{*} = B$.
\end{theorem}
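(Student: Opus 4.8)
The plan is to collapse the nested program into a single convex optimization over the per-block ranks $d_{ij}$ and then solve it in closed form with a Lagrange multiplier. First I would resolve the inner minimization $\min_{W,V} L_{\Pi}(M,WV^T)$ for fixed dimensions. Writing $W$ and $V$ in block-diagonal latent coordinates --- so that block $(ij)$ of the product is fit by a dedicated set of $d_{ij}$ shared columns, giving $(d_w)_i = \sum_j d_{ij}$ and $(d_v)_j = \sum_i d_{ij}$ --- the reconstruction decouples across blocks, and the best rank-$d_{ij}$ fit to $M^{(ij)}$ is governed by Eckart--Young--Mirsky. Hence the optimal per-block error is the tail of the squared spectrum, $\sum_{t>d_{ij}}\sigma_{ij}(t)^2$, and the objective becomes $L_\Pi = \sum_{ij}\frac{\Pi_{ij}}{n_i m_j}\sum_{t>d_{ij}}\sigma_{ij}(t)^2$. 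The parameter budget also separates: block $(ij)$ consumes $(n_i+m_j)d_{ij}$ entries, so the constraint reads $\sum_{ij}(n_i+m_j)d_{ij}=B$, matching the constraint in the statement.

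Next I would pass to the continuous relaxation flagged in the setup, treating each $d_{ij}$ as a real variable and each $\sigma_{ij}(\cdot)$ as a continuous, nonincreasing function of the index. The only property I need is that adding an infinitesimal amount of dimension to block $(ij)$ reduces its error at rate $\sigma_{ij}(d_{ij})^2$, i.e. the block-error function $E_{ij}(d)=\int_d^\infty \sigma_{ij}(t)^2\,dt$ satisfies $E_{ij}'(d)=-\sigma_{ij}(d)^2$. Because $\sigma_{ij}$ is nonincreasing and nonnegative, $E_{ij}''(d)=-2\sigma_{ij}(d)\sigma_{ij}'(d)\ge 0$, so each weighted block cost $\tfrac{\Pi_{ij}}{n_i m_j}E_{ij}$ is convex; the objective is a nonnegative combination of these and the budget is linear, so the relaxed problem is a convex program and the first-order (KKT) conditions are both necessary and sufficient for the global optimum.

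I would then form the Lagrangian $\mathcal{L}=\sum_{ij}\frac{\Pi_{ij}}{n_i m_j}E_{ij}(d_{ij})+\lambda\big(\sum_{ij}(n_i+m_j)d_{ij}-B\big)$ and set $\partial\mathcal{L}/\partial d_{ij}=0$. This gives $\frac{\Pi_{ij}}{n_i m_j}\sigma_{ij}(d_{ij})^2=\lambda(n_i+m_j)$, so that $\sigma_{ij}(d_{ij})=\sqrt{\lambda(n_i+m_j)(n_i m_j)\Pi_{ij}^{-1}}$, and inverting the spectral-decay function yields $d_{ij}^*=\sigma_{ij}^{-1}\big(\sqrt{\lambda(n_i+m_j)(n_i m_j)\Pi_{ij}^{-1}}\big)$, exactly the claimed form. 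Since the error strictly decreases in every $d_{ij}$, the budget is active, which pins down $\lambda$ via $\sum_{ij}(n_i+m_j)d_{ij}^*=B$; monotonicity of the left side in $\lambda$ guarantees a unique feasible multiplier. Summing the per-block ranks along rows and columns then recovers $(d_w^*)_i=\sum_j d_{ij}^*$ and $(d_v^*)_j=\sum_i d_{ij}^*$.

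I expect the main obstacle to be the first step: justifying that the shared-factor inner minimization genuinely decouples into independent per-block Eckart--Young problems. In general the factors $W^{(i)}$ and $V^{(j)}$ are reused across all blocks in their row and column, so fitting $M^{(ij)}$ well could in principle compete with fitting $M^{(ij')}$; the clean aggregation $(d_w^*)_i=\sum_j d_{ij}^*$ is precisely the block-diagonal latent allocation under which this competition disappears, and arguing that this allocation is without loss of optimality (or adopting it as the model, as the ``convex relaxation of a hard problem'' caveat suggests) is the crux. The remaining subtleties are purely analytic --- ensuring $\sigma_{ij}$ is invertible, by taking a generalized inverse across flat stretches of the spectrum, and checking that the continuous relaxation of the discrete tail sum preserves convexity --- but these are routine once the decoupling is in place.
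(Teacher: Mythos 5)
Your proposal follows the paper's proof essentially step for step: decouple the inner minimization block-wise, apply Eckart--Young--Mirsky to each block so the objective becomes the $\Pi_{ij}/(n_i m_j)$-weighted tails of the squared spectra, pass to the continuous relaxation in which the negative integral of the decreasing function $\sigma_{ij}^2$ is convex, and solve the Lagrangian first-order conditions to obtain $d_{ij}^* = \sigma_{ij}^{-1}\bigl(\sqrt{\lambda(n_i+m_j)(n_i m_j)\Pi_{ij}^{-1}}\bigr)$ with the active budget pinning down $\lambda$. The decoupling step you rightly flag as the crux is handled the same way in the paper --- it invokes its rank-additivity assumption and its block-wise factorization construction with fixed (non-free) projection matrices, i.e.\ it adopts the block-diagonal latent allocation as the model rather than proving it loses no optimality --- so your treatment is faithful to, not a departure from, the paper's argument.
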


%\vspace{-5pt}
 When we have a closed-form expression for the spectral decay of $M$, we can give a closed-form solution in terms of that expression. For illustrative purposes, we give the optimal dimensions for the simple two-block example under \emph{power spectral decay}. A matrix with spectral norm $\rho$ exhibits a singular value spectrum with power spectral decay $\beta > 0$ if the $k$-th singular values is given by: $\sigma(k) = \rho k^{-\beta}$. Based on the corollary below, the optimal dimension for an embedding vector scales with its popularity based on a fractional power law.

%\vspace{-6pt}
%\begin{Definition} \textbf{Power spectral decay}
%A matrix with spectral norm $\rho$ exhibits a singular value spectrum with power spectral %decay $\beta > 0$ if the $k$-th singular values is given by: $\sigma(k) = \rho k^{-\beta}$.
%\end{Definition}
%\vspace{-6pt}

%\vspace{-6pt}
\begin{corollary}
\small
For a vertically stacked two-block matrix, with each block exhibiting a power spectral decay, then $d_1^* \propto (1-\epsilon)^{\frac{1}{2\beta}}$ and $d_2^* \propto \epsilon^{\frac{1}{2\beta}}$
\end{corollary}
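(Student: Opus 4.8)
The plan is to specialize Theorem~\ref{thm:opt_sol} to the vertically stacked two-block geometry and then substitute the power-law spectrum, reading off the resulting dependence on the block popularities $\Pi_i$. A vertically stacked matrix $M = [M^{(1)},M^{(2)}]^T$ has a single column block, so $k_W = 1$ and $k_V = 2$; consequently $m_1 = m$ is fixed, the block-sum $\sum_j d_{ij}^*$ in Theorem~\ref{thm:opt_sol} collapses to the single term $d_{i1}^*$, and the block probability reduces to $\Pi_{i1} = \Pi_i$. Writing $d_i^*$ for the optimal dimension of row block $i$ and $\sigma_i$ for its spectral decay, Theorem~\ref{thm:opt_sol} then gives $d_i^* = \sigma_i^{-1}\!\left(\sqrt{\lambda(n_i+m)(n_i m)\,\Pi_i^{-1}}\right)$ for $i \in \{1,2\}$.

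First I would invert the power-law spectrum. From $\sigma_i(k) = \rho_i k^{-\beta}$ one solves $y = \rho_i k^{-\beta}$ for $k$ to get the explicit functional inverse $\sigma_i^{-1}(y) = (\rho_i/y)^{1/\beta} = \rho_i^{1/\beta}\,y^{-1/\beta}$, which is legitimate because $\sigma_i$ is strictly decreasing and hence invertible on its range. Substituting the argument from the previous display and applying the exponent $-1/\beta$ to the square root turns the factor $\Pi_i^{-1/2}$ inside the root into the positive power $\Pi_i^{1/(2\beta)}$:
\[
d_i^* = \rho_i^{1/\beta}\Big(\lambda(n_i+m)(n_i m)\Big)^{-1/(2\beta)}\,\Pi_i^{1/(2\beta)} .
\]

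Next I would isolate the popularity dependence. Every factor multiplying $\Pi_i^{1/(2\beta)}$ is independent of the popularity, so it collects into a constant $C_i = \rho_i^{1/\beta}(\lambda(n_i+m)(n_i m))^{-1/(2\beta)}$; under the matched two-block geometry of the example (equal row counts $n_i = n$ and equal spectral norms $\rho_i = \rho$, so the blocks differ only in popularity) this becomes a single shared constant $C = \rho^{1/\beta}(\lambda(n+m)(nm))^{-1/(2\beta)}$. Hence $d_i^* = C\,\Pi_i^{1/(2\beta)}$, and substituting $\Pi_1 = 1-\epsilon$ and $\Pi_2 = \epsilon$ yields $d_1^* \propto (1-\epsilon)^{1/(2\beta)}$ and $d_2^* \propto \epsilon^{1/(2\beta)}$ with common constant $C$, as claimed. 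If desired, $C$ is then pinned down by the budget constraint $(n+m)(d_1^*+d_2^*) = B$ inherited from Theorem~\ref{thm:opt_sol}.

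I expect the only real subtlety — rather than any hard calculation — to be bookkeeping the direction of the functional inverse: because $\sigma_i$ is decreasing, a rarer block (smaller $\Pi_i$, hence larger argument $\Pi_i^{-1/2}$) is mapped to a \emph{smaller} optimal dimension, which is exactly why the positive exponent $1/(2\beta)$ appears and why $d_2^* \to 0$ as $\epsilon \to 0$. I would also be explicit that ``$\propto$'' denotes proportionality with a constant common to the two blocks (absorbing $\lambda$, $\rho$, $n$, and $m$), so that the substantive content is the cross-block ratio $d_1^*/d_2^* = \big((1-\epsilon)/\epsilon\big)^{1/(2\beta)}$ rather than any absolute scale.
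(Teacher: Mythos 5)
Your proposal is correct and takes essentially the same route as the paper: the appendix proves its general power-law corollary, $d_{ij}^* = \lambda\,\zeta_{ij}\,\Pi_{ij}^{1/(2\beta)}$, exactly by substituting $\sigma(k)=\rho k^{-\beta}$ into the formula of Thm.~\ref{thm:opt_sol} and inverting, which is precisely your calculation specialized to two vertically stacked blocks with $\Pi_1 = 1-\epsilon$, $\Pi_2 = \epsilon$. Your added bookkeeping (the explicit inverse $\sigma^{-1}(y)=(\rho/y)^{1/\beta}$, the shared constant under matched block sizes, and pinning $\lambda$ via the budget constraint) only spells out what the paper dismisses as ``follows directly by substitution.''
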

%\vspace{-6pt}

%As expected,  % For example, when the spectrum follows a power law, optimal dimensions scale as a fractional power of popularity. The generic version of Corollary 4.1 can be found in Appendix C.

\begin{figure*}[h]
 \begin{center}
  \begin{subfigure}[l]{0.99\textwidth}
  \vspace{-10pt}
    \includegraphics[width=\textwidth]{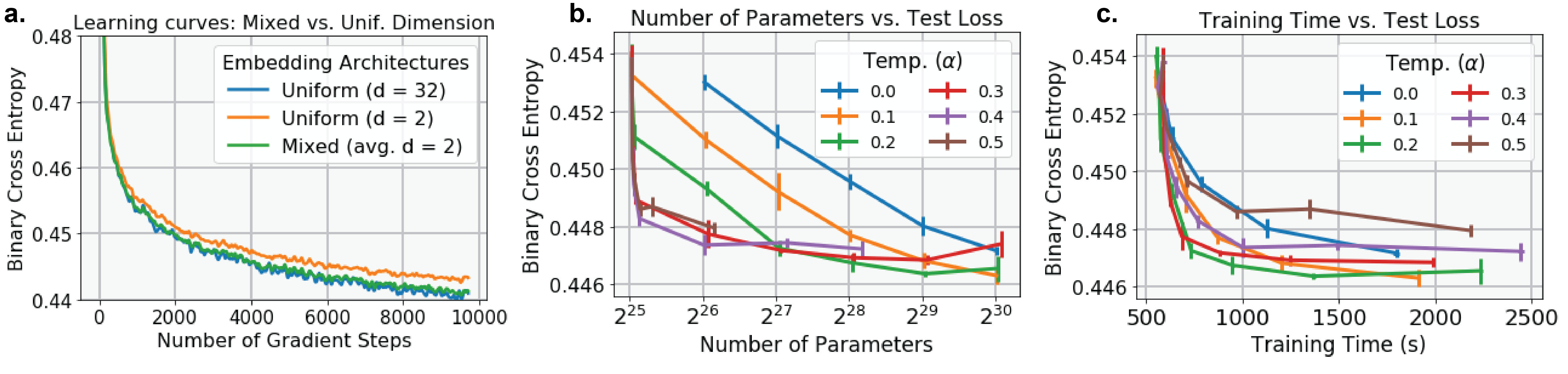}
   %\caption{Learning curves for selected emb. arch} %\james{Is y label test loss?}}
   \label{fig:ctr_learn}
  \end{subfigure}
   \vspace{-20pt}
\caption{ \footnotesize CTR prediction results for MD embeddings on Criteo dataset using DLRM. Implementation is available as part of an open-source project on GitHub: \texttt{facebookresearch/dlrm}.   Fig. 2a (left): Learning curves for selected emb. arch.  Fig. 2b (center): Loss vs. \# param. for varying $\alpha$. Fig 2c (right): Train time vs. loss for varying $\alpha$ \vspace{-20pt}}
 \end{center}
\end{figure*}

\begin{figure*}[h!]
 \begin{center}
  \begin{subfigure}[l]{0.99\textwidth}
    \includegraphics[width=\textwidth]{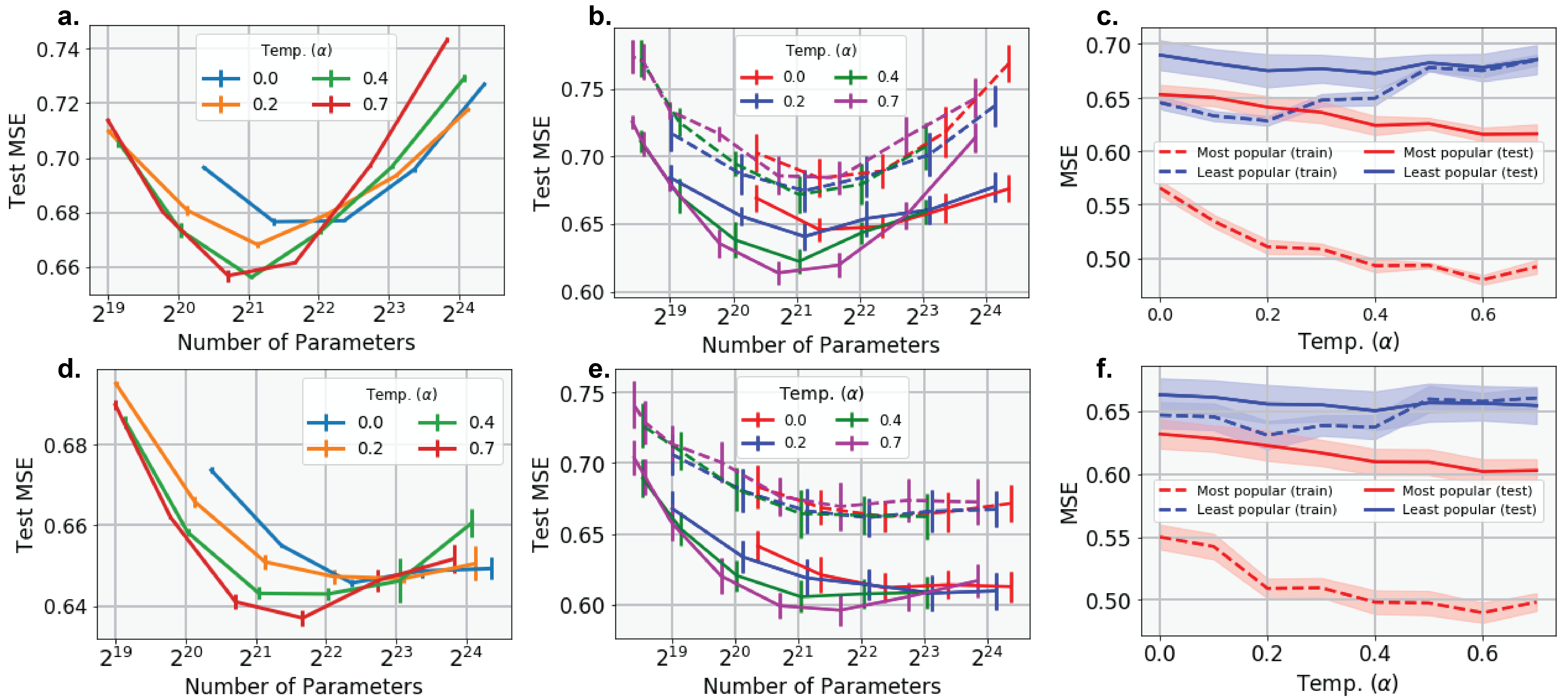}
    \vspace{-10pt}
   \label{fig:mf_perf}
  \end{subfigure}

\caption{\footnotesize Matrix Factorization (top row; a-c) and NCF (bottom row; d-f) for collaborative filtering on the MovieLens dataset. Fig. 3a \& 3d (right): MSE vs. \# params for varying $\alpha$. Fig. 3b \& 3e (center): MSE vs. \# params for varying $\alpha$. Dashed lines correspond to test samples that contain the one-third least popular items. Solid lines correspond to test samples that contain the one-third most popular items. Fig. 3c \& 3f (left): Generalization for popular (red) and rare (blue) items. Dashed lines correspond to training loss and solid lines correspond to test loss. \vspace{-27pt}} 
\label{fig:ml_case_study}
 \end{center}
\end{figure*}

\vspace{-7pt}
\subsection{ Large-scale CTR Prediction is Memory-Limited}  Labeled training data is easy to acquire in most large-scale CTR prediction systems because one can directly observe user engagement (or lack thereof) with personalized content. The embedding layer's memory footprint ends up being the primary bottleneck. In this situation, the results of Thm \ref{thm:opt_sol} yield appropriate guidelines for the optimal dimension for each embedding vector. The unavoidable difficulty is that one needs to know the spectrum of the target matrix to know the optimal dimension. One solution is to train an enormous embedding table with an enormous data set, thereby obtaining the spectrum, and then factoring (or re-training from scratch) at the optimal size. However, this solution still requires enormous resource usage during training. Alternatively, we can still leverage the insight of our theoretical framework to efficiently find good mixed embedding dimensions. Most spectral decays, whether they are flat or steep, can be decently well fit by a power law (so much so that there is a large literature dedicated to \emph{falsifying} power-laws even when the numerical fit appears reasonable \cite{clauset2009power}). Varying the temperature adequately captures the trend of the decay. By \emph{a priori} assuming that the spectral decay is a power law, we only have to tune one hyper-parameter over a narrow range that requires only exploring a small number of values. 

%Instead, we propose the following theoretically-motivated sizing motivated by our understandings of spectral decays in this context. We observe that spectral decays can commonly take the form of a power law function, as most data can be decently fit using power law functions. A power law can both capture a flat and sharp decay. It is actually a subject of current research to determine when power laws cannot be used on empirical data for trend finding.  Given that spectrums are non-increasing, power laws are versatile and can fit empirical spectrums that are either slowly or rapidly decaying by simply varying the temperature.  While we cannot know the exact form of the decay, power laws adequately capture the trend of the spectral decay, making this a strong and clever assumption to determine the temperature. By assuming a power law decay, we can then tune a single hyperparameter to optimize the fit  while keeping all other hyperparameters constant, making this approach highly tunable and robust. This intuitive yet effective heuristic enables vast improvement over a neural architecture search, involving training from scratch, and yields improved results

%\textit{Each embedding should be assigned a dimension proportional to some fractional power of its popularity}.

\textbf{Power-law Sizing Scheme.} Here we define \emph{block-level probability} $\mathbf{p}$. Let $\mathbf{p}$ be a $k$-dimensional probability vector (recall $k$ is the \# of blocks in the embedding layer) such that $\mathbf{p}_i$ is the average query probability over rows in the $i$-th block. When blocks exactly correspond to features, as in our CTR experiments, then $\mathbf{p}_i = \frac{1}{n_i}$ because each one row per block is queried per inference based on the value of each feature. More generally, under a block sampling structure $\Pi$, $\mathbf{p}_i = \sum_j \Pi_{ij} $.

\vspace{-7pt}
\begin{algorithm}[H]
\caption{Popularity-Based Dimension Sizing}
\begin{algorithmic}
\footnotesize
\State \textbf{Input:} Baseline dimension $\bar{d}$ and 
fixed temperature $0 \leq \alpha \leq 1$
\State \textbf{Input:} Probability vector $\textbf{p}$
\State \textbf{Output:} Dimension assignment vector  $\textbf{d}$
\State $\lambda \gets \bar{d} ||\mathbf{p}||_\infty^{-\alpha}$ \Comment{Compute scalar scaling factor}
\State $\textbf{d} \gets \lambda \mathbf{p}^\alpha$ \Comment{ Component-wise exponent}
\end{algorithmic}
\label{alg:alpha}
\end{algorithm}
\vspace{-7pt}

Alg. \ref{alg:alpha} shows code for the scheme. We use a temperature parameter $\alpha > 0$ to control the degree to which popularity influences the embedding dimension (as a simplification over using decay parameter $\beta$).  As $\alpha$ increases, so does the popularity-based skew in the embedding dimension. At $\alpha=0$, the embedding dimensions are all uniform. At $\alpha=1$, the embedding dimensions are proportional to their popularity.  %In Fig. 1b, we show the results of applying Alg. \ref{alg:alpha} to the Criteo dataset.

%Rather than spending time and memory retraining the full embedding on large data sets in the memory regime, we propose  a natural simple heuristic motivated by our understandings of spectral decays in this context. We observe that spectral decays can commonly take the form of a power law function, as most data can be decently fit using power law functions. A power law can both capture a flat and sharp decay. It is actually a subject of current research to determine when power laws cannot be used on empirical data for trend finding.  Given that spectrums are non-increasing, power laws are versatile and can fit empirical spectrums that are either slowly or rapidly decaying by simply varying the temperature.  While we cannot know the exact form of the decay, power laws adequately capture the trend of the spectral decay, making this a strong and clever assumption to determine the temperature. By assuming a power law decay, we can then tune a single hyperparameter to optimize the fit  while keeping all other hyperparameters constant, making this approach highly tunable and robust. This intuitive yet effective heuristic enables vast improvement over a neural architecture search, involving training from scratch, and yields improved results.

%First, note that, as consequence of Corollary 4.1, no matter how fast the spectral decay is, the dimension assigned to an embedding should never be greater 

%it is natural to assume that there is diminishing returns to the embedding dimension of a feature, it should follow that such a choice is suboptimal

%\vspace{-3pt}
\section{Experiments}

 %\texttt{https://github.com/facebookresearch/dlrm}. %Then, we empirically validate our theoretical framework by comparing the generalization gaps between popular and rare embeddings on a CF task using the MovieLens dataset. We provide detailed descriptions of our experimental protocols in Appendix B.

%We run experiments on the MovieLens and Criteo datasets with the corresponding models summarized in Table \ref{table:dataset}. All implementations described herein are done in the Pytorch framework \cite{paszke2017automatic} using NVIDIA V100 32GB GPUs \cite{NvidiaVolta100}. In all experiments, we use the Xavier uniform initialization \cite{glorot2010understanding}, for all matrices in our models (please refer to Appendix B for more details).

%In Section 5, under simplifying assumptions, we theoretically analyze the optimal dimension sizing for a given block-level popularity vector. In practice, we find that Alg. \ref{alg:alpha} works well and only requires trying a few different $\alpha$ to significantly improve parameter efficiency.

%We explore the trade-off between memory and statistical performance for MD embeddings, parameterized by varying temperatures ($\alpha$). 

We measure memory in terms of the number of $32$-bit floating point parameters in the model. Since different embedding base dimensions imply different widths in first hidden layers of the downstream deep model, for fairness, our parameter counts include both the embedding layer and all model parameters (recall that the parameter count is overwhelmingly dominated by embeddings). We report statistical performance in terms of cross entropy loss. Accuracy is reported in Appendix  (along with other experimental details). DLRM with uniform $d=32$ embeddings obtains an accuracy of $\sim79$\%, close to state-of-the-art for this dataset \cite{naumov2019deep}. We sweep parameter budgets from a lower bound given by 1 parameter per embedding vector ($d=1$) to an upper bound given by the memory limit of a $16$\texttt{GB} GPU. The parameters are allocated to embeddings according to the $\alpha$-parameterized rule proposed in Alg. \ref{alg:alpha}.

MD embeddings with $\alpha = 0.3$ produce a learning curve on par to that of $d=32$ UD embeddings  using a total parameter count equivalent to $d=2$ UD  (Fig. 2a), yielding a $16\times$ reduction in parameters. We can see that using MD embedding layers improves the memory-performance frontier at each parameter budget (Fig. 2b). The optimal temperature ($\alpha$) is dependent on the parameter budget, with higher temperatures leading to lower loss for smaller budgets. Embeddings with $\alpha = 0.4$ obtain performance on par with uniform embeddings using $16\times$ fewer parameters. Embeddings with $\alpha=0.2$ modestly outperform UD embeddings by an accuracy of $0.1$\% using half as many parameters. For RecSys, small improvements in accuracy are still important when volume is sufficiently large. The std. dev. estimates indicate that this gain is significant. MD embeddings not only improve prediction quality for a given parameter budget, but also for a given training time. MD embeddings can train $>2\times$ faster than UD embeddings at a given test loss  (Fig. 2c). This is possible because for a given loss, the MD layer uses far fewer parameters than a UD layer. The faster training we observe is likely due to superior bandwidth efficiency and caching on the GPU, enabled by the reduced memory footprint. We run all of our models on a single GPU with identical hyperparameters (including batch size) across all $\alpha$ (more details in Appendix).

\vspace{-1pt}
\textbf{Case Study: Collaborative Filtering.} Because context-free CF only includes two features, users and items, it is easier to gain insight into the effects of MD embeddings.  Matrix factorization (MF) is a typical algorithm for CF where the matrix factors are embeddings.  We also include Neural Collaborative Filtering (NCF) models \cite{he2017neural} to ensure that trends still hold when non-linear layers are introduced. Because we only have two features for this case study (users and items) we slightly modify our blocking approach. We sort users and items by empirical popularity, and uniformly partition them into block matrices. Then we can apply mixed dimensions within the users and items based on partitions (more details in Appendix).

\vspace{-2pt}
MD embeddings substantially outperform UD embeddings, especially at low parameter budgets (Figs. 3a \& 3d). For Figs. 3b \& 3e, we partition \emph{item} embeddings (since they often the focal point of recommendation) into the one-third most and least popular items (by empirical training frequency).  These results show that non-zero temperature ($\alpha$)  improves test loss on popular items and performs on par with or slightly better than UD embeddings on rare items. We report generalization curves for the popular and rare items in Figs 3c and 3f. We fix a parameter budget ($2^{21}$),  vary the temperature ($\alpha$), and plot training and test loss for both popular and rare item partitions. Allocating more parameters to popular items by increasing temperature ($\alpha$) decreases both training and test loss. On the other hand, allocating more parameters to rare items by decreasing temperature ($\alpha$) only decreases training loss but not test loss, indicating that the additional parameters on the rare embeddings are wasteful. Uniform parameter allocation, agnostic to popularity, is inefficient. A majority of parameters are severely underutilized on rare embeddings, whereas popular embeddings could still benefit from increased representational capacity.

%Even with UDs, we see a gap between in the training loss of popular and unpopular embeddings. This could be because at a given number of training epochs, the popular embeddings would receive, by definition, a greater number of training updates. In this case, several orders of magnitude more. %Another plausible explanation is that rare items tend to appeal to eclectic preferences, and are thus naturally more difficult to fit at training. 

 %To support our empirical findings, we turn to a theoretical analysis of the phenomena in the context of matrix completion. Furthermore, since our hypothesized mechanisms occur jointly, it is difficult to disentangle these mechanisms in the empirical setting. In the following section, we formally study these mechanisms from first principles, proving both mechanisms organically emerge as a consequence of popularity skew and exist independently of each other.
\vspace{-6pt}
\section{Conclusion}
\vspace{-3pt}
We show that MD embeddings greatly improve both parameter efficiency and training time. Through our case study, we demonstrate systematic and compelling empirical evidence that MD embeddings work by improving capacity for learning popular embeddings without compromising rare embeddings. Our theoretical framework is the first to mathematically explain how this occurs, and our experiments are the first to validate the phenomena.

\section*{Acknowledgments}

This work was initiated while A.A.G. was an intern at Facebook. The authors would like to thank Shubho Sengupta, Michael Shi, Jongsoo Park, Jonathan Frankle and Misha Smelyanskiy for helpful comments and suggestions about this work, and Abigail Grosskopf for editorial help with the manuscript.

%%%%%%
%% To balance the columns at the last page of the paper use this
%% command:
%%
%\enlargethispage{-1.2cm} 
%%
%% If the balancing should occur in the middle of the references, use
%% the following trigger:
%%
%\IEEEtriggeratref{3}
%%
%% which triggers a \newpage (i.e., new column) just before the given
%% reference number. Note that you need to adapt this if you modify
%% the paper.  The "triggered" command can be changed if desired:
%%
%\IEEEtriggercmd{\enlargethispage{-20cm}}
%%
%%%%%%

%%%%%%
%% References:
%% We recommend the usage of BibTeX:
%%
\bibliographystyle{unsrt}
\bibliography{bib}

%%
%% where we here have assume the existence of the files
%% definitions.bib and bibliofile.bib.
%% BibTeX documentation can be obtained at:
%% http://www.ctan.org/tex-archive/biblio/bibtex/contrib/doc/
%%%%%%

%% Or you use manual references (pay attention to consistency and the
%% formatting style!):

\appendix

The appendix is organized as follows. We begin with an extended discussion, including additional related works and background references in A. We include supplementary experiments and reproducibility details concerning our empirical work in B. Finally, we include mathematical details including assumptions, theorems, and proofs in C.

\section*{Supplementary \& Extended Discussion}

\subsection{Representation Learning} Embedding-based approaches, such as matrix factorization (MF) \cite{hastie2015matrix, Koren2009, Rendle2019} or neural collaborative filtering (NCF) \cite{Dacrema2019,he2017neural,xue2017deep,fan2018matrix}, are among the most computationally efficient solutions to CF and matrix completion. The simplest model for canonical CF is MF, which treats embeddings as factorizing the target matrix directly and comes with nice theoretical guarantees. Direct optimization over embedding layers is a non-convex problem. However, due to specific problem structure, many simple algorithms, including first-order methods, have been proven to find global optima (under mild assumptions) \cite{hardt2014understanding,sun2016guaranteed,jain2013low}.  Neural collaborative filtering (NCF) models, which make use of non-linear neural processing, are more flexible options. While NCF models often lack the theoretical guarantees offered by MF, they usually perform mildly better on real-world datasets. In CTR prediction, it is common to use embedding layers to represent categorical features and have various neural layers to process the various embeddings.

\subsection{Regularization} We note that in many instances, embeddings for collaborative filtering tasks are usually trained with some type of norm-based regularization for the embedding vectors. While this particular form of regularization works well for small-scale embeddings, it is non-trivial to scale it for large-scale embeddings. For many standard loss functions, only the embeddings queried on the forward pass have non-zero gradients on the backward pass. Using sparse updates for the embedding tables is essentially mandatory for efficient training at large scale. Thus, contrary to popular belief in academic circles, large-scale embeddings are often trained without norm-based regularization which are incompatible with sparse updates. This is because the gradient update when using a regularization term should back-propagate to every embedding vector in the layer, rather than just those queried in the forward pass. Because this technique is not feasible in large-scale CTR prediction, we explicitly do not use embedding regularization in our collaborative filtering task. Furthermore, we note that from the perspective of popularity skew, that embedding norm regularization actually implicitly penalizes rare embeddings more than popular ones, since a larger fraction of training updates only contain norm penalty signal for rare embeddings than popular ones. This is an interesting connection that could be explored in future work, but it does not achieve the stated goal of parameter reduction.

\subsection{Non-uniform Embeddings} Recent works have proposed similar but distinct non-uniform embedding architectures, particularly for the natural language processing (NLP) domain \cite{chen2018groupreduce,baevski2018adaptive}. As mentioned in the main text, there are substantial differences between those methods and ours. We emphasize that neither of those methods would work out-of-the-box for CTR prediction. Critically, in contrast to NLP, CTR embeddings encode categorical values for individual features, and thus come with feature-level structure that should not be ignored when architecting the embedding layer. In NLP, embeddings represent words and can be thought of a single large bag of values --- in contrast to representing the various categorical values a particular feature can take on. We discover that ignoring this feature-level structure in the embedding layer adversely affects performance, dropping accuracy by $>1\%$. For this reason, the sorted blocking technique introduced in \cite{baevski2018adaptive} is not effective in CTR prediction. Additionally, embedding layers in NLP are pre-trained from unsupervised language corpus, unlike in RecSys, which means that clustering and factoring the embeddings as in \cite{chen2018groupreduce} prior to training is not feasible in CTR prediction.

Furthermore, in contrast to previous works, we theoretically analyze our method. Moreover, past methods do not even empirically validate the speculated mechanisms by which their methods work. For example, in \cite{baevski2018adaptive}, authors claim their proposed architecture, "reduces the capacity for less frequent words with the benefit of reducing overfitting to rare word." While the proposed method works well on benchmarks, the claim that the method reduces overfitting is not supported. As shown in \cite{yin2018dimensionality,arora2019implicit}, embedding overfitting depends critically on the training algorithm and even the model atop the embeddings. In fact, when training is properly tuned, embeddings are quite resilient to overfitting, which means the claim made in \cite{baevski2018adaptive} is far from self-evident. It is more accurate to view rare embeddings as wasting parameters rather than overfitting them.

Non-uniform and deterministic sampling have been discussed in the matrix completion literature \cite{negahban2012restricted,meka2009matrix, liu2017new}, but only in so far as how to correct for popularity so as to improve statistical recovery performance, or build theoretical guarantees for completion under deterministic or non-uniform sampling.

\subsection{Collaborative Filtering \& Matrix Completion}

 CF tasks come in many variations and have a large mass scientific literature, with good reviews of classical algorithms and approaches provided in \cite{adomavicius2005toward,lousame2009taxonomy}. Related to CF is the simplified matrix completion problem \cite{candes2010matrix, candes2009exact, candes2010power, hastie2015matrix, keshavan2010matrix,vishwanath2010information, sun2016guaranteed, recht2011simpler}.

\subsection{Memory-Efficient Embeddings} The techniques to decrease the memory consumed by embedding tables can be roughly split into two high-level classes: (i) compression algorithms and (ii) compressed architectures. Simple \emph{offline} compression algorithms include post-training quantization, pruning or low-rank SVD \cite{Andrews2015, Bhavana2019, Sattigeri, Sun2016}.  \textit{Online} compression algorithms, including quantization-aware training, gradual pruning, and periodic regularization, are somewhat less popular in practice because of the additional complications they add to already intricate deep model training methods. \cite{alvarez2017compression,frankle2018lottery,Naumov2018reg,park2018value,kang2020learning}. Model distillation techniques \cite{Shu2017,Tissier2018,serra2017getting,tang2018ranking} are another form of compression and can have online and offline variants. On the other hand, compressed architectures have the advantage of not only reducing memory requirements for inference time, but also at training time. This is the approach followed by hashing-based and tensor factorization methods \cite{attenberg2009collaborative, gao2018cuckoo, karatzoglou2010collaborative, Khrulkov2019, Shi2018}.

\subsection{Forward Propagation with Mixed Dimension Embeddings}

Forward propagation for a MD embedding layer can be summarized in Alg. \ref{alg:emb_layer}. The steps involved in this algorithm are differentiable, therefore we can perform backward propagation through this layer and update matrices $\bar{E}^{(i)}$ and $P^{(i)}$ accordingly. We note that Alg. \ref{alg:emb_layer} may be generalized to support multi-hot lookups, where embedding vectors corresponding to $z$ query indices are fetched and reduced by a differentiable operator, such as add, multiply or concatenation.

\begin{algorithm}[h]
\caption{Forward Propagation}
\begin{algorithmic}
\footnotesize
   \State \textbf{Input:} Index $x \in [n]$
   \State \textbf{Output:} Embedding vector $\textbf{e}_{x}$ 
     \State $i \gets 0$ and $t \gets 0$ 
     \While{$t + n_i < x$} \Comment{Find offset $t$ and sub-block $i$}
     \State $t \gets t + n_i$
       \State $i \gets i + 1$ 
     \EndWhile 
     \State $\textbf{e}_{x} \gets \bar{E}^{(i)}[x-t]P^{(i)}$  \Comment{Construct an embedding vector}
\end{algorithmic}
\label{alg:emb_layer}
\end{algorithm}

\subsection{Popularity Histograms}

The key idea of the MD embeddings is to address the skew in the popularity of objects appearing in a dataset. To illustrate it we present popularity histograms of accesses across all features for the Criteo Kaggle dataset in Fig. \ref{fig:criteo_hist} and individually for users and items features for the MovieLens dataset in Fig. \ref{fig:user_hist} and \ref{fig:items_hist}, respectively. 

\begin{figure*}[h]
 \begin{center}
  \begin{subfigure}[l]{0.32\textwidth}
    \includegraphics[width=\textwidth]{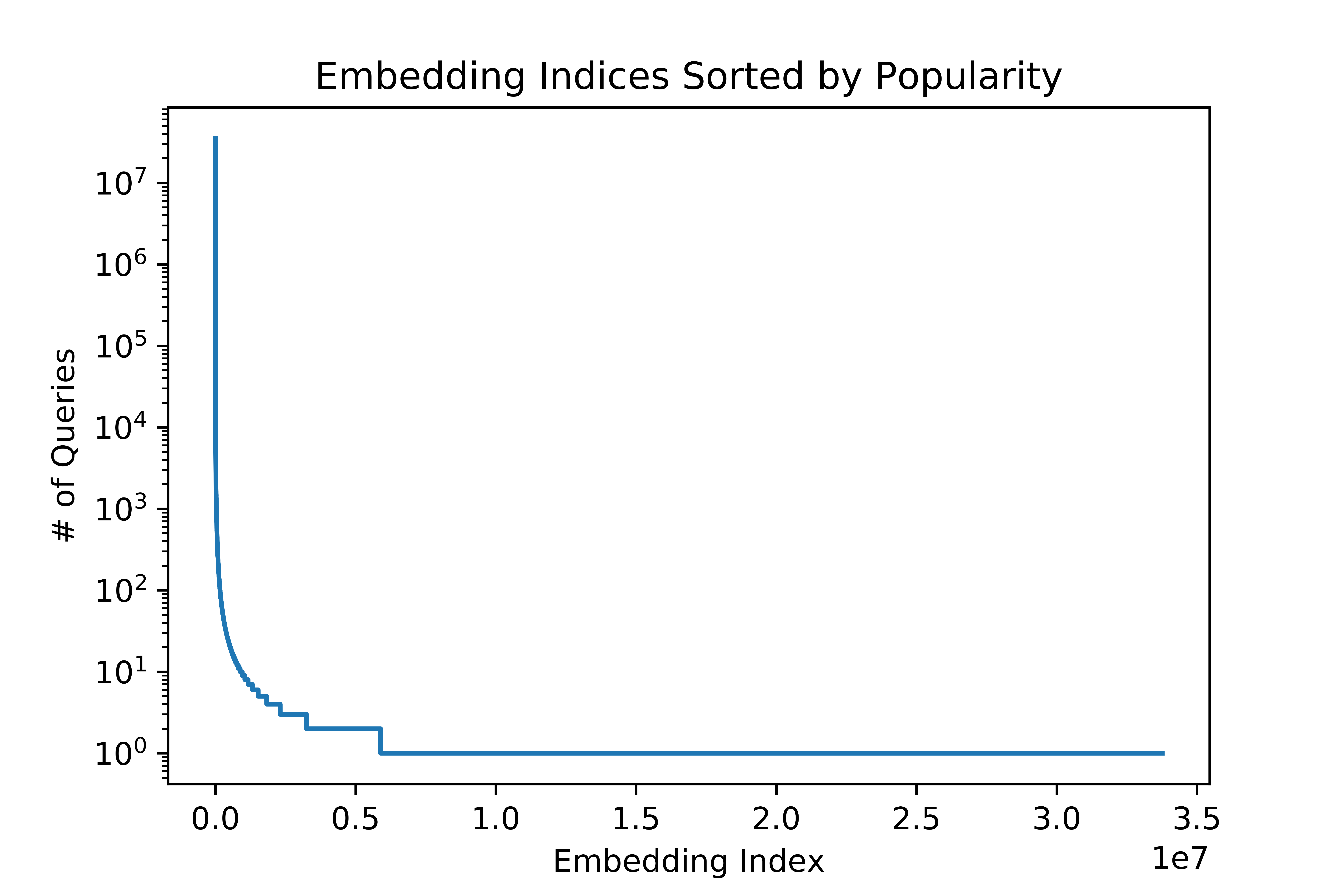}
    \caption{Histogram of accesses for Criteo}
      \label{fig:criteo_hist}
  \end{subfigure}
  \begin{subfigure}[c]{0.32\textwidth} 
   \includegraphics[width=\textwidth]{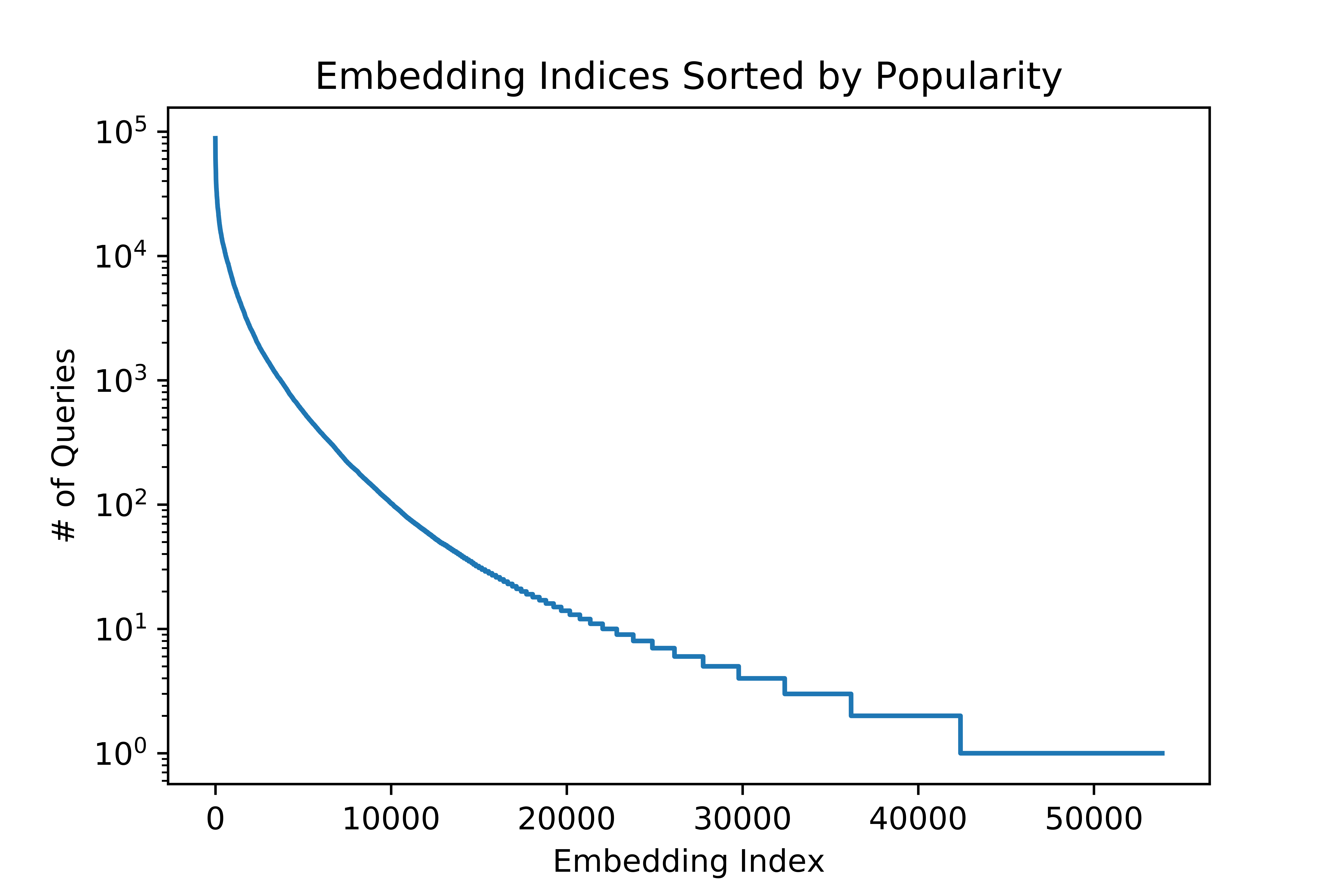}
   \caption{Histogram of accesses for items}
       \label{fig:items_hist}

  \end{subfigure} 
  \begin{subfigure}[r]{0.32\textwidth}
\includegraphics[width=\textwidth]{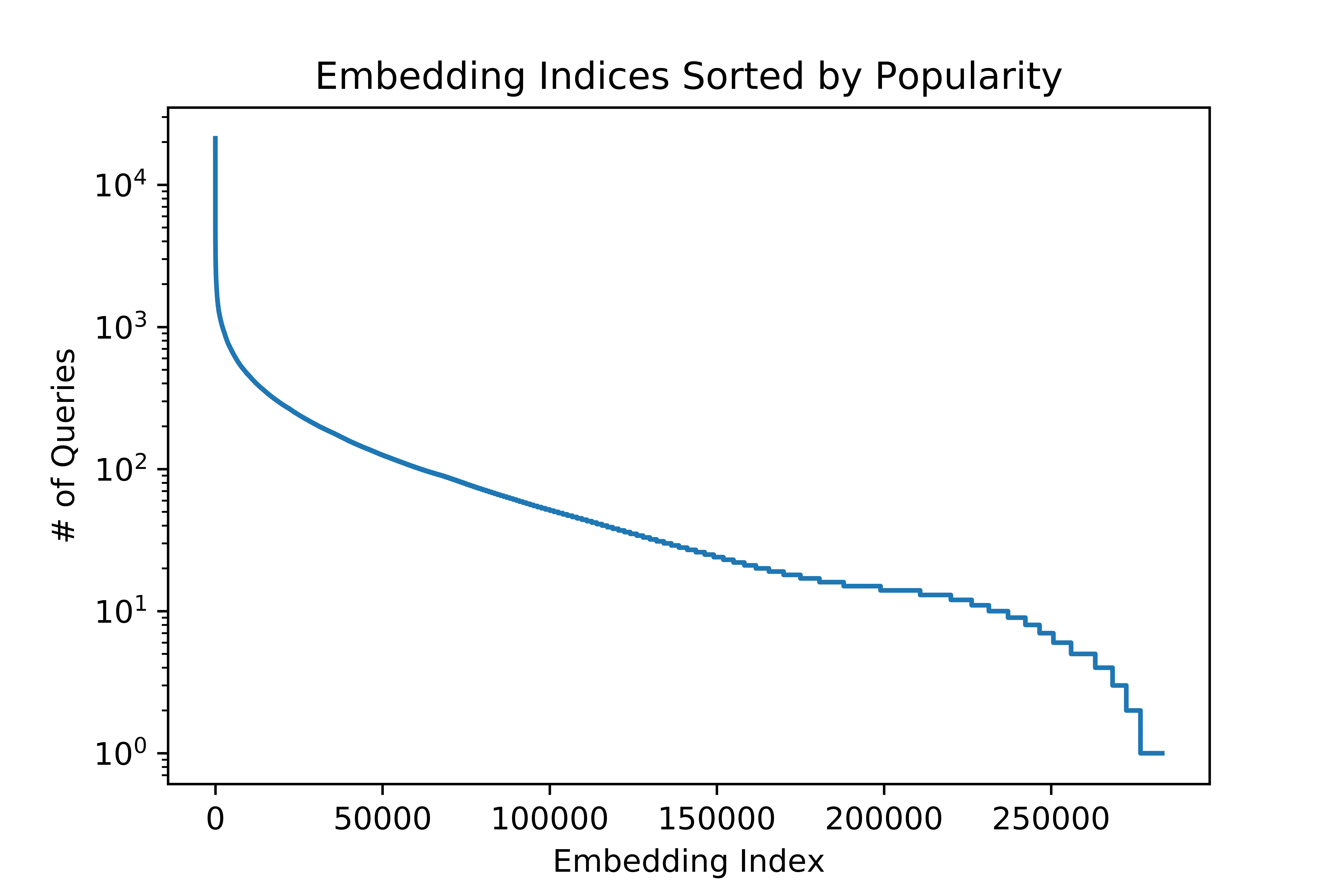}
   \caption{Histogram of accesses for users}
       \label{fig:user_hist}

  \end{subfigure}
\caption{Popularity skew in real-world datasets.}
 \end{center}
\end{figure*}

\begin{figure*}[h]
 \begin{center}
  \begin{subfigure}[l]{0.4\textwidth}
    \includegraphics[width=\textwidth]{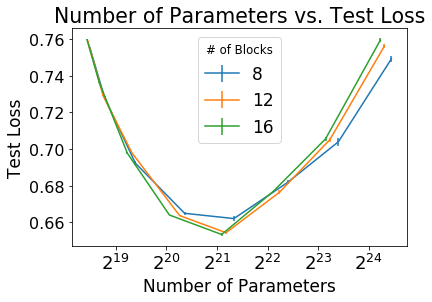}
    \caption{\small Effect of num. of equiparititons on CF with MD}
  \end{subfigure}
  \begin{subfigure}[c]{0.4\textwidth} 
   \includegraphics[width=\textwidth]{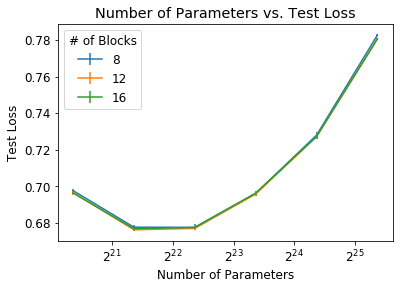}
   \caption{\small Effect of num. of equiparititons on CF with UD}
  \end{subfigure} 
    \caption{\small Effect of num. of equiparititons on CF}
 \end{center}

\end{figure*}

\section*{Experimental Details \& Supplementary Experiments}

In this section we provide detailed protocols for the experiments in Section 5 of the main text. All code is implemented using the Pytorch \cite{paszke2019pytorch} library. All algorithms are run as single GPU jobs on Nvidia V100s \cite{NvidiaVolta100}. All confidence intervals reported are standard deviations obtained from 5 replicates per experiment with different random seeds. In Table \ref{table:dataset} we summarize the datasets and models used.

\begin{table*}[h]
  \centering
  \begin{tabular}{||c c c c c||} 
      \hline
      Dataset & Tasks & \# Samples & \# Categories & Models Used  \\ [0.25ex] 
      \hline\hline
      MovieLens & CF & 27M & 330K & MF, NCF \\ 
      \hline
      Criteo Kaggle & CTR & 40M & 32M & DLRM \\
      \hline
  \end{tabular}
  \caption{Datasets, tasks and models}
  \label{table:dataset}
  \vspace{-10pt}
\end{table*}

\subsection{Collaborative Filtering Case Study}

Recall that CF tasks require a different blocking scheme than the one presented in the main text because we only have 2 categorical features. These features have corresponding embedding matrices $W \in \mathbb{R}^{n \times d}$ and $V \in \mathbb{R}^{m \times d}$, for users and items, respectively. To size the MD embedding layer we apply MDs within individual embedding matrices by partitioning them. We block $W$ and $V$ separately. First, we sort and re-index the rows based on empirical row-wise frequency: $i < i' \implies f_i \geq f_{i'}$, where $f_i$ is the frequency that a user or item appears in the training data\footnote{Sorting and indexing can be done quickly on a single node as well as in the distributed settings.}. Then, we partition each embedding matrix into $k$ blocks such that the sum of frequencies in each block is equal. For each of the $k$ blocks, the total popularity (i.e. probability that a random query will include that row) for each block is constant (see Alg. \ref{alg:equipart}). Then, for a given frequency $\mathbf{f}$ the $k$-equipartition is unique and is simple to compute.  In our experiments, we saw that setting $k$ anywhere in the $[8,16]$ range is sufficient to observe the effects induced by MDs, with diminishing effect beyond these thresholds (Fig. 5).

\vspace{-3pt}
\begin{algorithm}[h]
\caption{Partitioning Scheme for CF}
\begin{algorithmic}
\footnotesize
\State \textbf{Input:} Desired number of blocks $k$ 
\State \textbf{Input:} Row-wise frequencies vector $\mathbf{\bar{f}}$ 
\State \textbf{Output:} Offsets vector $\mathbf{t}$
\State $\mathbf{f} \gets \texttt{sort} (\mathbf{\bar{f}})$ \Comment{Sort by row-wise frequency}
\State Find offsets $t_i$ such that $\sum_{l=t_{i}}^{t_{i+1}} f_l = \sum_{l=t_{j}}^{t_{j+1}} f_l$ for $\forall i,j$ %\Comment{Yields a unique $k$-equipartition}
\end{algorithmic}
\label{alg:equipart}
\end{algorithm}

In our experiments we use the full $27$M MovieLens dataset \cite{Harper2015}. We train at learning rate $10^{-2}$, found in the same manner as for CTR prediction. For consistency with CTR prediction, we also used the Amsgrad optimizer \cite{Reddi2018}. We train for 100 epochs, taking the model with the lowest validation loss. We early terminate if the model does not improve after 5 epochs in a row. We use a batch size of $2^{15}$ in order to speed-up training. We did not observe significant differences between this and smaller batch sizes. Our reported performance, in terms of MSE, are comparable to those elsewhere reported in the literature \cite{strub2016hybrid}.

For initialization we use the uniform Xavier initialization (for consistency with CTR prediction). Also, for the NCF model, we use a 3-layer MLP with hidden layer dimensions $128-128-32$. We used default LeakyReLU activations.

For the item embedding partitioning (in Fig. \ref{fig:ml_case_study}), we partitioned the item embeddings by empirical popularity mass. This means that the top third item embeddings represent to order $10^3$ items, whereas the bottom third item embeddings represent order $10^5$ items. Thus, the top third and bottom third partitions have the same empirical popularity mass, not the same number of items.

\subsection{CTR Prediction}

From the perspective of this work, using MD embedding layers on real CTR prediction data with modern deep recommendation models is an important experiment that shows how MD embedding layers might scale to real-world recommendation engines.

\begin{figure}[h]
\small
 \centering
   \includegraphics[width=0.35\textwidth]{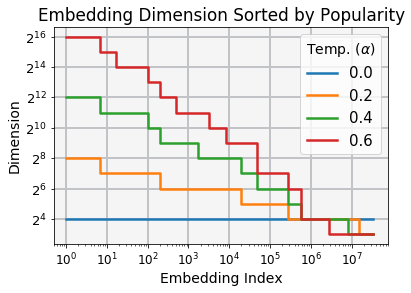}
   \caption{ \small Embedding parameters for 26
categorical features allocated at different temperatures ($\alpha$) for the same parameter budget on the Criteo dataset. Higher temperatures
results in higher dimensions for popular embeddings and $\alpha=0$ is uniform dimensions.See Section 4 and Alg.1 for more details concerning the assignment scheme. The dimensions are rounded to powers of 2.}
   \label{fig:emb_dims_sorted}
\end{figure}

In our experiments we use state-of-the-art DLRM \cite{naumov2019deep} and the Criteo Kaggle dataset. We determined the learning rates, optimizer, batch size, and initializations scheme by doing a simple grid search sweep on \emph{uniform} dimension embeddings of dimension $32$. \emph{Ultimately, we used Amsgrad with a learning rate of $10^{-3}$, a batch size of $2^{12}$, and a uniform Xavier initialization for all CTR experiments reported in the main text}. As is customary in CTR prediction tasks, we train for only one epoch \cite{naumov2019deep}. Examples of parameter allocation based on Alg. \ref{alg:alpha} can be found in Fig. \ref{fig:emb_dims_sorted}.

\begin{figure}[h!]
\small
 \centering
   \includegraphics[width=0.35\textwidth]{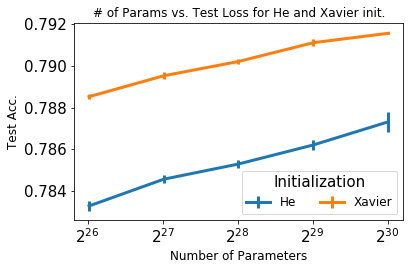}
   \caption{ \small Xavier and He (fan-out) initializations for DLRM with UD embeddings at various parameter budgets}
   \label{fig:init_ctr}
\end{figure}

\begin{figure}[h!]
\small
 \centering
   \includegraphics[width=0.35\textwidth]{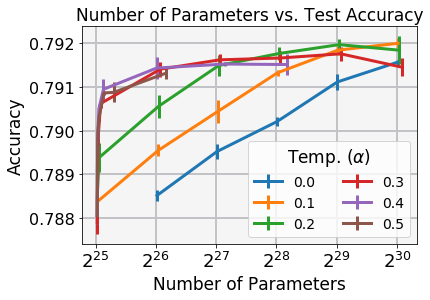}
   \caption{\small Accuracy on CTR Prediction for varying temp. ($\alpha$)}
   \label{fig:acc_ctr}
\end{figure}

\begin{figure}[h!]
 \centering
   \includegraphics[width=0.35\textwidth]{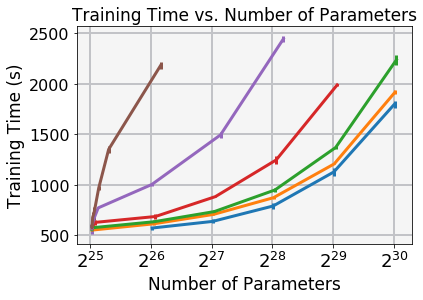}

   \caption{\small Training time vs. parameter counts for varying temperature (legend same as Fig. \ref{fig:acc_ctr})}
   \label{fig:tt}
\end{figure}

 For learning rates, we tried $\lambda \in \{10^{-2},10^{-3},10^{-4},10^{-5}\}$. $10^{-3}$ was best. We tried Amsgrad, Adam, Adagrad, and SGD optimizers. Amsgrad was best.

For batch size, we tried powers of 2 from $2^{5}$ to $2^{12}$. The batch size hardly affected the memory footprint on the GPU, since the majority of memory is spent on the embedding layer, which is only sparsely queried on each forward pass in the batch. We also found that performance was largely invariant in batch size, so we selected batch size of $2^{12}$.

For initialization schemes, we tried uniform Xavier \cite{glorot2010understanding}, uniform He  (fan-in) and uniform He (fan-out) \cite{he2015delving}. We initialized all neural network parameters, including embedding matrices, according to the same scheme. We found that He fan-in resulted in severe training instability. Xavier outperformed He fan-out by a considerable margin (Fig. \ref{fig:init_ctr}).

We also report the accuracy for our CTR prediction experiments (Fig. \ref{fig:acc_ctr}). The curves are like a reflection of the cross entropy loss in the main text.

Finally, we report training time vs. parameter counts for varying temperatures (Fig. \ref{fig:tt}).

\subsection{On the Range of Temperatures ($\alpha$)}
Recall that we use a power-law inspired embedding dimension sizing scheme. In the main text, we emphasize that $\alpha$ should be in the range $(0,1)$. In principle, one could pick an $\alpha >1$, but since it is natural to assume that there is diminishing returns to the embedding dimension of a feature, it should follow that such a choice is poor. An $\alpha >1/2$ would imply a sub-linear spectral decay which is rarely the case in embeddings learned form real-world data. This coincides with out experiments, where the best $\alpha$ were actually below $1/2$.

\section*{Theoretical Details}
We proceed to present proofs of theorems as well as additional results that did not fit into the main text of the paper. First, we describe the details of the mathematical optimization procedure used in the proofs. Then, we discuss the extension of our method from the matrix case to the tensor case. Finally, we enumerate the details/assumptions and give the proofs for our theorems.

\subsection{Block-wise MD Factorization}

The first point to address is the specifics of the SGD-variant used to solve the MD factorization. We adopt the particular variation of the SGD algorithm for matrix factorization proposed in \cite{sun2016guaranteed} and assume that block structure is known or has been estimated. We show that under rank additivity assumption it can be applied to factor the blocks of the matrix $M$ independently and yet construct a rank-$r$ approximation for it. Note that in this scenario the projections do not need to be free parameters (see Alg. \ref{alg:blockwise_mixd_fact}).

\begin{algorithm}[h]
\small
\caption{Block-wise Mixed Dim. Factorization}
\begin{algorithmic}
\small
\State \textbf{Input:} Partially masked target block matrix $\mathcal{P}_\Omega(M)$. 
\State \textbf{Input:} The blocks $M^{(ij)}$ with block-wise rank $r_{ij}$. 
\State \textbf{Output:} MD embeddings $\bar{W},\bar{V}$. 
\State $W^{(i,j)}, V^{(i,j)} \gets \mathbf{SGD}(\mathcal{P}_\Omega(M^{(ij)}))$\Comment{Factor each block}
\For{ $1 \leq i \leq k_W$ }
\State $\bar{W}^{(i)} \gets [W^{(i,1)},...,W^{(i,k_V)}] \in \mathbb{R}^{n_i \times d_w^i}$ \Comment{Assemble $\bar{W}$}
\State $d_w^{i} \gets \sum_{j=1}^{k_V}r_{ij}$ \Comment{Construct projection $P$}
\State $s_w^{i} \gets \sum_{l=1}^{i-1} \sum_{j=1}^{k_V} r_{lj}$
\State $t_w^{i} \gets \sum_{l=i+1}^{k_W} \sum_{j=1}^{k_V} r_{lj}$
\State $P_W^{(i)} \gets \begin{bmatrix} 0_{d_w^i \times s_w^i} , I_{d_w^i \times d_w^i}, 0_{d_w^i \times t_w^i} \end{bmatrix} \in \mathbb{R}^{d_w^i \times r}$
%\State $W^{(i)} \gets \bar{W}^{(i)} P_W^{(i)} = \begin{bmatrix} 0^1 , \bar{W}^{(i)}, 0^2 \end{bmatrix} \in \mathbb{R}^{n_i \times r}$
\EndFor

\For{ $1 \leq j \leq k_V$ }
\State $\bar{V}^{(j)} \gets [V^{(1,j)},...,V^{(k_W,j)}] \in \mathbb{R}^{m_j \times d_v^j}$ \Comment{Assemble $\bar{V}$}
\State $d_v^j \gets \sum_{i=1}^{k_W}r_{ij}$ \Comment{Construct projection $P$}
\State $s_{ij} \gets \sum_{l=1}^{i-1}r_{li} + \sum_{l=1}^{j-1}r_{il}$
\State $t_{ij} \gets \sum_{l=j+1}^{k_V}r_{il} + \sum_{l=i+1}^{k_W}r_{lj}$
\State $P_V^{(j)} \gets \begin{bmatrix}
0_{r_{1j} \times s_{1j}}, I_{r_{1j} \times r_{1j}}, 0_{r_{1j} \times t_{1j}}\\
\vdots\\
0_{r_{ij} \times s_{ij}} , I_{r_{ij} \times r_{ij}}, 0_{r_{ij} \times t_{ij}} \\
\vdots\\
0_{r_{k_wj} \times s_{k_wj}} , I_{r_{k_wj} \times r_{k_wj}}, 0_{r_{k_wj} \times t_{k_wj}}
\end{bmatrix}  \in \mathbb{R}^{d_v^j \times r}$
%\State $V^{(j)} \gets \bar{V}^{(j)} P_V^{(i)} = \begin{bmatrix} ...,0^{1j} , \bar{V}^{(j)}, 0^{1j},...\end{bmatrix} \in \mathbb{R}^{m_j \times r}$
\EndFor

\State$\bar{W} \gets (\bar{W}^{(1)},...,\bar{W}^{(k_W)},P_W^{(1)},...,P_W^{(k_W)})$
\State$\bar{V} \gets (\bar{V}^{(1)},...,\bar{V}^{(k_V)},P_V^{(1)},...,P_V^{(k_V)})$

\end{algorithmic}
\label{alg:blockwise_mixd_fact}
\end{algorithm}

\paragraph{Two block example} We illustrate block-wise MD factorization on $2n \times m$ two block rank-$r$ matrix 

$M 
{=} 
\begin{bmatrix}
M^{(1)} \\
M^{(2)}
\end{bmatrix}
{=} 
\begin{bmatrix}
W^{(1)}V^{(1)^T} \\
W^{(2)}V^{(2)^T}
\end{bmatrix}
= 
\begin{bmatrix}
W^{(1)} P_W^{(1)} \\
W^{(2)} P_W^{(2)}
\end{bmatrix}
\begin{bmatrix}
V^{(1)^T} \\
V^{(2)^T}
\end{bmatrix}$
\text{with}
$W^{(1)} \in \mathbb{R}^{n \times r_1}, V^{(1)} \in  \mathbb{R}^{m \times r_1}, W^{(2)} \in \mathbb{R}^{n \times r_2}, V^{(2)} \in  \mathbb{R}^{m \times r_2}$ obtained by Alg. \ref{alg:blockwise_mixd_fact}, projections $P_W^{(1)} = [I, 0] \in \mathbb{R}^{r_1 \times r}$, $P_W^{(2)} = [0, I] \in \mathbb{R}^{r_2 \times r}$ defined by construction and $I$ being an identity matrix.

\paragraph{Four block example} We extend the example to $2n \times 2m$ four block rank-$r$ matrix below

$M = \begin{bmatrix}
M^{(11)} M^{(12)}\\
M^{(21)} M^{(22)}\\
\end{bmatrix}
=
\begin{bmatrix}
\bar{W}^{(1)} P_W^{(1)} \\
\bar{W}^{(2)} P_W^{(2)}
\end{bmatrix}
\begin{bmatrix}
(\bar{V}^{(1)} P_V^{(1)} )^T \\
(\bar{V}^{(2)} P_V^{(2)} )^T
\end{bmatrix}$

where rank-$r_{ij}$ block $M^{(ij)} = W^{(ij)}V^{(ij)^T}$factors and   

$$\bar{W}^{(1)} = [W^{(11)}, W^{(12)}] \in \mathbb{R}^{n \times r_{11} + r_{12} }$$
$$\bar{W}^{(2)} = [W^{(21)}, W^{(22)}] \in \mathbb{R}^{n \times r_{21} + r_{22} }$$
$$\bar{V}^{(1)} = [V^{(11)},\phantom{1.} V^{(21)}] \in \mathbb{R}^{n \times r_{11} + r_{21} }$$
$$\bar{V}^{(2)} = [V^{(12)},\phantom{1.} V^{(22)}] \in \mathbb{R}^{n \times r_{12} +  r_{22} }$$

%Alg. \ref{alg:blockwise_mixd_fact} returns a MD layer comprised of $\bar{W} = (\bar{W}^{(1)}, \bar{W}^{(2)}, P_W^{(1)},  P_W^{(2)})$ and $\bar{V} = ( \bar{V}^{(1)}, \bar{V}^{(2)},  P_V^{(1)},  P_V^{(2)})$. These matrices are obtained as follows according to Alg. \ref{alg:blockwise_mixd_fact}:

were obtained by Alg. \ref{alg:blockwise_mixd_fact}, while projections

$P_W^{(1)} = \begin{bmatrix} I,0,0,0 \\ 0, I, 0, 0 \end{bmatrix} \in \mathbb{R}^{r_{11} + r_{12} \times r}$

$P_W^{(2)} = \begin{bmatrix} 0, 0, I, 0 \\ 0, 0, 0, I \end{bmatrix}  \in \mathbb{R}^{r_{21} + r_{22} \times r}$

$P_V^{(1)} =
\begin{bmatrix}
I, 0, 0, 0 \\
0, 0, I, 0 
\end{bmatrix} \in \mathbb{R}^{r_{11} + r_{21} \times r}$

$P_V^{(2)} =\begin{bmatrix}
0, I, 0, 0\\
0, 0, 0, I \end{bmatrix} \in \mathbb{R}^{r_{12} + r_{22} \times r}$

are defined by construction. Note that expanded terms

$\bar{W}^{(1)}P_W^{(1)} = [W^{(11)}, W^{(12)}, 0 ,0]$

$\bar{W}^{(2)}P_W^{(2)} = [0,0, W^{(12)}, W^{(22)}]$

$\bar{V}^{(1)}P_V^{(1)} = [V^{(11)}, 0 , V^{(21)}, 0]$

$\bar{V}^{(2)}P_V^{(2)} = [ 0 , V^{(12)}, 0,  V^{(22)}]$

Intuitively, in a case with more blocks, the projections generalize this pattern of ``sliding" the block elements of $W$ and ``interleaving" the block elements of $V$ as defined in Alg. \ref{alg:blockwise_mixd_fact}.

All of the proofs in this work rely on this particular variant of SGD (a slight departure from the practical solver used in the experiments).

\subsection{Block-wise Rank Additivity}

Implicitly, we have assume a notion of \emph{rank additivity} over the block structure of $M$. Our notion of rank additivity used above is slightly less general than the one in \cite{baksalary1996note} but is sufficient for our purposes.

\begin{definition} \textbf{Rank Additive}
\emph{
Block matrix $M$ is \textbf{rank additive} if $r = \sum_{i=1}^{k_V}\sum_{j=1}^{k_W} r_{ij}$, where $r=\texttt{rank}(M)$ and $r_{ij}=\texttt{rank}(M^{(ij)})$.}
\end{definition}

Of course, rank additivity is a mild assumption when the ranks $r_{ij} \ll m$ and the number of blocks is asymptotically constant. In fact, it holds with high probability for standard random matrix models.

Let us show when the assumption of block-wise rank additivity holds for target matrix $M$. We begin by restating a relevant lemma \cite{tian2004rank,matsaglia1974equalities}. 

\begin{lemma} Let $A \in \mathbb{R}^{m\times n}, B \in \mathbb{R}^{m \times k}, C \in \mathbb{R}^{l \times n}$ and $D \in \mathbb{R}^{l \times k}$, while $\mathcal{R_M} = \texttt{range}(M)$ and $r_M = \texttt{rank}(M)$. Then,
%1) $ \texttt{rank}([ A, B ]) = r_A + r_B$ iff $\mathcal{R}_A \cap \mathcal{R}_B = \{\emptyset\}$
%\\
%2) $\texttt{rank}(\begin{bmatrix} A  \\  C \end{bmatrix}) = r_A + r_C$ iff $\mathcal{R}_{A^T} \cap \mathcal{R}_{C^T} = \{\emptyset\}$
%\\
$\texttt{rank}(\begin{bmatrix} A & B  \\
 C & D
\end{bmatrix}) = r_A + r_B + r_C + r_D$ 
iff 
$\mathcal{R}_A \cap \mathcal{R}_B = \mathcal{R}_C \cap \mathcal{R}_D = \mathcal{R}_{A^T} \cap \mathcal{R}_{C^T} = \mathcal{R}_{B^T} \cap \mathcal{R}_{D^T} = \{0\}$.
\end{lemma}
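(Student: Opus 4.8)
The plan is to prove both directions from a single elementary fact about concatenated matrices: if $X$ and $Y$ share the same number of rows, then $\texttt{rank}[X \; Y] = r_X + r_Y - \dim(\mathcal{R}_X \cap \mathcal{R}_Y)$, since the columns of $[X\;Y]$ span $\mathcal{R}_X + \mathcal{R}_Y$; dually, for vertically stacked blocks one has $\texttt{rank}\begin{bmatrix} X \\ Y\end{bmatrix} = r_X + r_Y - \dim(\mathcal{R}_{X^T} \cap \mathcal{R}_{Y^T})$, reasoning on row spaces. Applying the horizontal version to the two row-blocks $[A\;B]$ and $[C\;D]$, and the vertical version to the two column-blocks $G := \begin{bmatrix}A\\C\end{bmatrix}$ and $H := \begin{bmatrix}B\\D\end{bmatrix}$, I get the two universal chains $\texttt{rank}(M) \le \texttt{rank}[A\;B] + \texttt{rank}[C\;D] \le r_A+r_B+r_C+r_D$ and $\texttt{rank}(M) \le \texttt{rank}(G) + \texttt{rank}(H) \le r_A+r_B+r_C+r_D$, where the first inequality in each chain is subadditivity of rank over block stacks and the second uses the concatenation formula.

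For necessity ($\Rightarrow$), suppose $\texttt{rank}(M) = r_A+r_B+r_C+r_D$. Then every inequality in both chains is forced to be an equality. Tightness of the second step of the first chain forces $\texttt{rank}[A\;B] = r_A+r_B$ and $\texttt{rank}[C\;D] = r_C+r_D$, which by the concatenation formula is exactly $\mathcal{R}_A \cap \mathcal{R}_B = \{0\}$ and $\mathcal{R}_C \cap \mathcal{R}_D = \{0\}$. Tightness of the second chain likewise forces $\texttt{rank}(G) = r_A+r_C$ and $\texttt{rank}(H) = r_B+r_D$, i.e. $\mathcal{R}_{A^T} \cap \mathcal{R}_{C^T} = \{0\}$ and $\mathcal{R}_{B^T} \cap \mathcal{R}_{D^T} = \{0\}$. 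This recovers all four intersection conditions.

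For sufficiency ($\Leftarrow$), which is the crux, I would work with the column-block decomposition and write $\texttt{rank}(M) = \dim(\mathcal{R}_G + \mathcal{R}_H) = \texttt{rank}(G) + \texttt{rank}(H) - \dim(\mathcal{R}_G \cap \mathcal{R}_H)$. The main step is to show the intersection is trivial. Take $w \in \mathcal{R}_G \cap \mathcal{R}_H$ and split it as $w = \begin{bmatrix}u\\v\end{bmatrix}$; then $w = Gx = \begin{bmatrix}Ax\\Cx\end{bmatrix}$ and $w = Hy = \begin{bmatrix}By\\Dy\end{bmatrix}$ for some $x,y$, so $u = Ax = By \in \mathcal{R}_A \cap \mathcal{R}_B = \{0\}$ and $v = Cx = Dy \in \mathcal{R}_C \cap \mathcal{R}_D = \{0\}$ by the first two hypotheses, whence $w = 0$. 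Thus $\dim(\mathcal{R}_G \cap \mathcal{R}_H) = 0$, giving $\texttt{rank}(M) = \texttt{rank}(G) + \texttt{rank}(H)$. Finally, the third and fourth hypotheses give $\texttt{rank}(G) = r_A+r_C$ and $\texttt{rank}(H) = r_B+r_D$ via the vertical concatenation formula, and adding these yields $\texttt{rank}(M) = r_A+r_B+r_C+r_D$. The only delicate point is the intersection argument: the four conditions are visibly \emph{necessary} from the inequality chains, but seeing that they are jointly \emph{sufficient} requires the explicit coordinate-splitting computation above, where conditions one and two control the intersection and conditions three and four supply the ranks of the stacks; everything else is bookkeeping with the two standard rank identities.
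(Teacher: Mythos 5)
Your proof is correct, but note that the paper never proves this lemma at all: it is restated verbatim from the literature (Marsaglia--Styan and Tian, the two works cited alongside it) and used as a black box, so there is no in-paper argument to match against. Your self-contained route is sound. Both of your base identities are instances of the Grassmann dimension formula, $\texttt{rank}[X \; Y] = r_X + r_Y - \dim(\mathcal{R}_X \cap \mathcal{R}_Y)$ and its transpose for vertical stacks; the necessity direction via tightness of the two inequality chains is airtight (equality of the sum forces each summand to its individual upper bound), and the sufficiency direction correctly isolates the one nontrivial step, namely that hypotheses one and two kill $\mathcal{R}_G \cap \mathcal{R}_H$ by splitting $w = Gx = Hy$ into its top $m$ and bottom $l$ coordinates, after which hypotheses three and four supply $\texttt{rank}(G) = r_A + r_C$ and $\texttt{rank}(H) = r_B + r_D$. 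It is worth observing that your coordinate-splitting argument is essentially the same device the paper \emph{does} use one step later, in its proof of the proposition generalizing this lemma to $k_V \times k_W$ block matrices, where triviality of intersections of block-row ranges (and of their transposes) is propagated across blocks; indeed your two-block sufficiency proof extends to that setting by induction on blocks. What each approach buys: the paper's citation keeps the appendix short while invoking the full if-and-only-if, whereas your proof makes the result self-contained and, unlike the paper's proposition (which only needs and only proves the sufficiency direction), actually establishes the converse as well.
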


We generalize the above lemma in the proposition below. 

\begin{proposition}
\label{prop:block_add}
A block matrix $M$ is rank additive if 
$\mathcal{R}_{M^{(ij)}} \cap \mathcal{R}_{M^{(ij')}} =  \{0\}$ for all $1 \leq j \leq k_V$ and any $j \neq j'$

$\mathcal{R}_{M^{(ij)T}} \cap \mathcal{R}_{M^{(i'j)T}} = \{0\}$ for all $1 \leq i \leq k_W$ and any $i \neq i'$
\end{proposition}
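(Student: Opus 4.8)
The plan is to prove the rank identity $\texttt{rank}(M) = \sum_{i,j} r_{ij}$ by establishing the two inequalities separately. The upper bound is free: writing $C_j$ for the $j$-th block-column (the vertical stacking of $M^{(1j)},\dots,M^{(k_Vj)}$), the column space of a horizontal concatenation is the sum of the pieces' column spaces and the column space of a vertical concatenation is spanned by the pieces, so $\texttt{rank}(M) = \texttt{rank}([C_1,\dots,C_{k_W}]) \le \sum_j \texttt{rank}(C_j) \le \sum_{i,j} r_{ij}$ by subadditivity of rank under concatenation. Hence all the content lies in the matching lower bound, i.e.\ in showing that the contributions of the blocks are linearly independent.

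For the lower bound I would argue directly on the column space $\mathcal{R}_M \subseteq \mathbb{R}^n$, using the coordinate decomposition $\mathbb{R}^n = \bigoplus_{i} \mathbb{R}^{n_i}$. First, the hypothesis on the row spaces within each block-column $j$, fed into the vertical (row-space) case of the preceding Lemma, gives $\texttt{rank}(C_j) = \sum_i r_{ij}$. It then remains to show the subspaces $\mathcal{R}_{C_1},\dots,\mathcal{R}_{C_{k_W}}$ form a direct sum. Any $x_j \in \mathcal{R}_{C_j}$ is of the form $x_j = C_j\alpha_j$, so its $i$-th coordinate block satisfies $x_j^{(i)} = M^{(ij)}\alpha_j \in \mathcal{R}_{M^{(ij)}}$. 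A relation $\sum_j x_j = 0$ therefore decouples across coordinate blocks into $\sum_j x_j^{(i)} = 0$ with $x_j^{(i)} \in \mathcal{R}_{M^{(ij)}}$ for each $i$; independence of the column spaces within row-block $i$ forces every $x_j^{(i)} = 0$, hence every $x_j = 0$. Combining, $\dim \mathcal{R}_M = \sum_j \texttt{rank}(C_j) = \sum_{i,j} r_{ij}$, which is the lower bound. An equivalent route is an induction on $k_V + k_W$ that peels off the first block-row and block-column, collapses the remainder into a $2\times 2$ super-block matrix, and applies the stated Lemma; this specializes to the Lemma exactly when $k_V = k_W = 2$.

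The main obstacle is that both the within-column rank computation and the coordinate-block argument require the relevant families of subspaces to be \emph{genuinely independent} (i.e.\ to form direct sums), and for more than two blocks per row or column this is strictly stronger than the pairwise trivial-intersection conditions as literally stated --- three coplanar lines through the origin meet pairwise in $\{0\}$ yet are linearly dependent. The substantive step is therefore to bridge this gap. I would do so either by replacing each stated hypothesis with its direct-sum form (equivalently, a running-intersection condition $\mathcal{R}_{M^{(ij)}} \cap \sum_{j' \neq j}\mathcal{R}_{M^{(ij')}} = \{0\}$), under which the argument above and the inductive reduction to the Lemma both go through verbatim, or by invoking the paper's own observation that for low-rank blocks with an asymptotically constant number of blocks, independence holds with high probability under standard random-matrix models, so that the pairwise conditions are generically sufficient. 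Either way, the two-block Lemma supplies the base case and the essential $2\times 2$ rank accounting.
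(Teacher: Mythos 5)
Your argument is sound and, structurally, it is the transpose of the paper's own proof: the paper first sums ranks within each block-row using the column-space hypothesis (its Eqn.~1), then applies the same reasoning to $M^T$, treating $M^T$ as a concatenation of the $M^{(i)T}$ and using the coordinate decomposition $\mathbb{R}^m = \bigoplus_j \mathbb{R}^{m_j}$ to intersect the row spaces of distinct block-rows (its Eqn.~2). You work block-columns first and decouple across $\mathbb{R}^n = \bigoplus_i \mathbb{R}^{n_i}$ instead; the two routes are mirror images. Your explicit vanishing argument (splitting $\sum_j x_j = 0$ into coordinate blocks $\sum_j x_j^{(i)} = 0$) is in fact cleaner than the paper's chain of containments, which asserts the equality $\texttt{range}(M^{(i)T}) = \bigoplus_j \texttt{range}(M^{(ij)T})$ where only ``$\subseteq$'' holds for a vertical stacking --- harmlessly, since only the containment is used, and the step intersecting two coordinate-wise direct sums is valid precisely because the summands sit in distinct coordinate blocks, as you note.

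The obstacle you flag is genuine, and it is a flaw in the paper's own proof, not just a difficulty of your route: the paper passes from the pairwise hypotheses to ``we directly obtain'' dimension additivity, silently using independence of a family of subspaces where only pairwise trivial intersections are assumed, once in each direction. Your three-coplanar-lines observation translates directly into a counterexample to the proposition as literally stated: take one block-column and three $1 \times 2$ block-rows $(1,0)$, $(0,1)$, $(1,1)$; the first hypothesis is vacuous, the second holds pairwise, $\sum_{ij} r_{ij} = 3$, yet $\texttt{rank}(M) = 2$. Your proposed repair is the right one: strengthening the hypotheses to the direct-sum (running-intersection) form makes both your argument and the paper's go through verbatim, and costs nothing downstream, since the paper anyway justifies rank additivity generically --- for random low-dimensional subspaces in high ambient dimension, full independence (not merely pairwise disjointness) holds with high probability. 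For $k_V = k_W = 2$, where pairwise and independent coincide, the quoted Lemma applies as stated and supplies the base case exactly as you say.
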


\begin{proof}

Let $M^{(i)}$ be the $i$-th block-row of $M$. If for all $j \neq j'$, the $\texttt{range}(M^{(ij)}) \cap \texttt{range}(M^{(ij')}) = \{0\}$, then we directly obtain Eqn. 1: $$
\mathbf{dim}(\texttt{range}(M^{(i)})) = \sum_j \mathbf{dim}(\texttt{range}(M^{(ij)}))$$

Thus, each block-row is rank additive under the assumptions. With some minor additional effort, we can re-apply the above reasoning on the transpose: $M^T = [M^{(1)T},...,M^{(k_V)T}]$, treating the whole matrix as a block-row with $M^{(i)T}$ as the constituent blocks. 

Note that we have that $ \texttt{range}(M^{(i)T}) =  \bigoplus_{j=1}^{k_V}\texttt{range}(M^{(ij)T}) $
since $M^{(i)T}$ is a concatenation: $M^{(i)T} = [M^{(i0)T},...,M^{(ik_V)T}] $. 

Thus we have for $i \neq i'$:

$ \texttt{range}(M^{(i)T}) \cap \texttt{range}(M^{(i')T})  =$  $$\left( \bigoplus_{j=1}^{k_V}\texttt{range}(M^{(ij)T}) \right) \bigcap \left( \bigoplus_{j=1}^{k_V}\texttt{range}(M^{(i'j)T}) \right)  \subset$$
$$\bigoplus_{j=1}^{k_V}\left(\texttt{range}(M^{(ij)T}) \cap \texttt{range}(M^{(i'j)T})\right) = 
\bigoplus_j(\{0\}) = \{0\}$$

This implies

$$\{0\} \subset \texttt{range}(M^{(i)T}) \cap \texttt{range}(M^{(i')T}) $$
and thus  
$$\texttt{range}(M^{(i)T}) \cap \texttt{range}(M^{(i')T}) = \{0\}$$ 
from which we can conclude Eqn. 2: 

$$\mathbf{dim}(\texttt{range}(M^{T})) = \sum_{i}\mathbf{dim}(\texttt{range}(M^{(i)T}))$$

To conclude the proof, we have 

$\texttt{rank}(M) = \texttt{rank}(M^T) $

$=\mathbf{dim}(\texttt{range}(M^{T}))$ 

$=\sum_{i}\mathbf{dim}(\texttt{range}(M^{(i)T}))$ by Eqn. 2

$=\sum_{i}\texttt{rank}(M^{(i)T}) $

$=\sum_{i}\texttt{rank}(M^{(i)}) $

$=\sum_{i}\mathbf{dim}(\texttt{range}(M^{(i)})) $

$=\sum_{i}\sum_{j}\mathbf{dim}(\texttt{range}(M^{(ij)})$ by Eqn. 1

$=\sum_{i}\sum_{j}\texttt{rank}(M^{(ij)})$

\end{proof}

In other words, as long as the column and row spaces of these block matrices only intersect at the origin, rank additivity is attained. Of course, in a high-dimensional ambient space, randomly selected low-dimensional subspaces will not intersect beyond the origin from which it follows that rank additivity is in general a mild assumption that generally holds in practice.

\subsection{Data-Limited Regime}

We cover additional details about for the data-limited, as well as provide proofs for the associated theorems in this section.

\subsubsection{Extension to Tensor Completion}

As mentioned before, matrix completion is a common model for CF tasks. We assume the reader is familiar with this literature. We introduce the more general tensor completion problem as well. Tensor completion generalizes to contextual CF tasks and subsumes matrix completion as a special case. We review this here, following a setting similar to \cite{chen2013exact,zhang2019recovery}.   For tensor completion, the goal is recovering $T \in \mathbb{R}^{n_1 \times ... \times n_\kappa}$ where $T_{x_1,...,x_\kappa} \in \{0,1\}$ denotes if given context $(x_3,...,x_\kappa)$, user $x_1$ engages with item $x_2$. We assume $T$ has a low \emph{pairwise interaction rank} $r$, meaning $T$ can be factored into $\kappa$ matrices, $\mathcal{M}^{(i)} \in \mathbb{R}^{n_i \times r}$ for $i \in [\kappa]$ as follows: $$T_{x_1,...,x_\kappa} = \sum_{(i,j) \in [\kappa] \times [\kappa]} \langle \mathcal{M}^{(i)}_{x_i}, \mathcal{M}^{(j)}_{x_j} \rangle$$.

Under the assumed pairwise interaction rank $r$ for $T$, we can factor $T$ into $\kappa$ matrices, $\mathcal{M}^{(1)},...,\mathcal{M}^{(1)}$. we can adapt our model to the tensor case by exploiting the block structure and treating $M$ as a block pairwise interaction matrix rather than a preference matrix. Let $k_V = k_W = \kappa$ and let each block represent an interaction matrix: $M^{(ij)} = \mathcal{M}^{(i)} (\mathcal{M}^{(j)})^T$. Hence, $M$ is symmetric and with this simple construction, the factors of $T$ are represented as the blocks of $M$. The remaining distinction is that in the tensor case, the algorithm only observe sums of elements selected from the blocks of $M$ instead of observing the entries of $M$ directly. This minor distinction is addressed in both \cite{chen2013exact} and \cite{zhang2019recovery} and with appropriate care to details, the two observation structures are largely equivalent. For brevity, we discuss only the matrix case here, while keeping in mind this natural extension to the tensor case.  When thinking about the categorical features in CTR prediction, this construction is precisely the one we use to block by features, as described in Section 3.

\subsubsection{Assumptions}

Beyond the rank addivity assumption, we also implicitly assume a classical assumption on \emph{incoherence}.

The notion of \emph{incoherence} is central to matrix completion \cite{candes2010matrix,candes2010power,candes2009exact,candes2011tight, sun2016guaranteed, keshavan2010matrix}. Throughout this work, we implicitly assume that $M$ is $\mu$-incoherent. Note that asymptotic notation occults this. For many standard random models for $M$, $\mu$ scales like $\sqrt{r \log n}$ \cite{keshavan2010matrix}, but here, we simply take $\mu$-incoherence as an assumption on $M$. Note that all matrices are incoherent for some $\mu \in [1, \frac{\max \{n,m\} }{r}]$ \cite{sun2016guaranteed}.

\begin{definition}

 \textbf{Incoherence}. \emph{
Let $M = USV^T$ be the singular-value decomposition for a matrix rank-$r$ matrix $M \in \mathbb{R}^{n \times m}$. Matrix $M$ is $\mu$-incoherent if 
for all 
$1 \leq i \leq n, ||U_i||_2^2 \leq \frac{\mu r}{n}$ 
and for all 
$1 \leq j \leq m, ||V_j||_2^2 \leq \frac{\mu r}{n}$.}
\end{definition}

\paragraph{Low-sample Sub-matrix}

We denote $M_{\varepsilon}$ as the \emph{low sample sub-matrix} of blocks corresponding to minimum marginal  sampling rate $\varepsilon$. Concretely, $M_\varepsilon = [M^{(i_{\varepsilon}1)},...,M^{(i_{\varepsilon}k_V)}] \in \mathbb{R}^{n_{i_{\varepsilon}} \times m}$ if $\varepsilon_W \leq \varepsilon_V$ and $M_\varepsilon = [M^{(1j_{\varepsilon})},...,M^{(k_Wj_{\varepsilon})}] \in \mathbb{R}^{n \times m_{j_{\varepsilon}}}$ otherwise. For convenience, we also define $\Tilde{n}_\varepsilon = n_{i_\varepsilon}$ if $\varepsilon_W \leq \varepsilon_V$ and $\Tilde{n}_\varepsilon = m_{j_\varepsilon}$ otherwise. We refer to $\Tilde{n}_\varepsilon$ as the \textbf{size} of the low-sample sub-matrix (since the other dimension is inherited from the size of $M$ itself).
%\end{definition}

\subsubsection{Popularity-agnostic algorithms} (including UD matrix factorization) are those that can be seen as empirically matching at the observed indices at a given rank constraint, or any relaxation thereof, without taking advantage of popularity. MD factorization imposes additional popularity-based constraints. These additional constraints become essential to completion when the popularity is significantly skewed.

\begin{definition}\textbf{Popularity-Agnostic Algorithm.} \emph{
Let $f(\theta)$ be some arbitrary but fixed model with parameters $\theta$ that outputs an attempted reconstruction $\hat{M}$ of matrix $M$. For a given rank $r^*$ let $\mathcal{S}$ be any set of matrices such that for $\hat{S} = \{\hat{M} | rank(\hat{M}) = r^*\}$  we have $\hat{S} \subseteq \mathcal{S}$.
An algorithm $\mathcal{A}$ is popularity-agnositc if it outputs the solution to an optimization characterized by a Lagrangian of the form $\mathcal{L}(\mathbf{\theta,\lambda}) = ||M-\hat{M}||_\Omega^2 + \lambda\mathbf{1}[\hat{M} \not\in \mathcal{S}]$ where indicator function $\mathbf{1}[x] = 1$ if x=True and 0 otherwise.}
\end{definition}

\subsubsection{Popularity-Agnostic Completion Bounds}

It is standard to impose a low-rank structure in the context of matrix completion. We are interested in understanding how and when popularity-based structure can improve recovery. While, UD embeddings impose a low-rank structure, at a given rank, we can interpret our MD embeddings as imposing additional popularity-based constraints on the matrix reconstruction. While our MD embeddings maintain a particular rank, they do so with less parameters, thereby imposing an additional popularity-based restriction on the space of admissible solution matrices.

\paragraph{Non-asymptotic Upper Bound} We first give a simple lower bound on the sample complexity for popularity-agnostic algorithm. The bound is straightforward, based on the fact that without additional problem structure, in order to complete a matrix at rank $r$, you need at least $r$ observations, even on the least popular row or column. The global reconstruction efforts will always be thwarted by the least popular user or item. The bound below is non-asymptotic, holding for any problem size. The theorem below implies that popularity-agnostic algorithms pay steep recovery penalties depending on the least likely row/column to sample. If you want to exactly recover a matrix, you can only do as well as your most difficult row/column.

\begin{theorem}
Fix some $0< \delta < \frac{1}{2}$. Let $\varepsilon$ be the minimum marginal sampling rate and let $\Tilde{n}_\varepsilon$ be the size of the low-sample sub-matrix. Suppose number of samples $N \leq \frac{r}{\varepsilon}(1-\delta) $. Then, no popularity-agnostic algorithm can recover $M$ with probability greater than $\exp(-\frac{r\Tilde{n}_\varepsilon \delta^2}{3(1-\delta)})$.
\end{theorem}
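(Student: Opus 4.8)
The plan is to upper-bound the recovery probability by the probability of a necessary sampling event, and then to control that event with a Chernoff bound across independent rows. Without loss of generality I assume the minimum marginal rate is attained on the row side ($\varepsilon_W \le \varepsilon_V$), so that the low-sample sub-matrix $M_\varepsilon$ consists of the $\tilde{n}_\varepsilon = n_{i_\varepsilon}$ rows lying in the least-sampled block $i_\varepsilon$; the column case follows by transposing $M$. The central reduction is a counting obstruction: I claim that a popularity-agnostic algorithm can return $\mathcal{A}(\mathbf\Omega) = M$ only if every row of $M_\varepsilon$ contains at least $r$ observed entries.

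To establish this necessary condition, suppose some row $k$ of $M_\varepsilon$ has $s_k < r$ observations. Writing the ground truth as $M = WV^{T}$ with $V \in \mathbb{R}^{m \times r}$, the $k$-th row satisfies $M_{kl} = \langle W_k, V_l\rangle$, so the observed entries of row $k$ impose only $s_k < r$ linear constraints on the latent vector $W_k \in \mathbb{R}^r$. Hence there is an affine family of dimension at least $r - s_k \ge 1$ of alternatives $W_k'$, and replacing only row $k$ of $W$ by a generic such $W_k'$ (keeping $V$ fixed) yields a matrix $M' \ne M$ of rank at most $r$ that agrees with $M$ on all of $\Omega$. Because the popularity-agnostic objective $\mathcal{L}(\theta,\lambda) = \|M - \hat{M}\|_\Omega^2 + \lambda\,\mathbf{1}[\hat{M} \notin \mathcal{S}]$ depends on $\hat{M}$ only through the observed residual and the membership $\hat{M} \in \mathcal{S}$ (both $M$ and $M'$ lie in $\mathcal{S}$ and achieve zero observed residual), the algorithm cannot distinguish $M$ from $M'$ and so cannot certify recovery. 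Consequently $\mathbf{Pr}[\mathcal{A}(\mathbf\Omega) = M] \le \mathbf{Pr}[\,\forall k \in M_\varepsilon:\ S_k \ge r\,]$, where $S_k$ is the number of observed entries in row $k$.

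Next I control the right-hand side. Under block-wise Bernoulli sampling each $S_k$ for $k$ in block $i_\varepsilon$ is a sum of independent entrywise Bernoulli indicators with mean $\mathbb{E}[S_k] = \frac{N}{n_{i_\varepsilon}}\sum_j \Pi_{i_\varepsilon j} = N\varepsilon$, and the hypothesis $N \le \frac{r}{\varepsilon}(1-\delta)$ gives $\mu := \mathbb{E}[S_k] \le r(1-\delta)$. Applying the multiplicative Chernoff upper-tail bound at threshold $r = (1+\theta)\mu$ with $\theta = \frac{\delta}{1-\delta}$, which lies in $(0,1)$ precisely because $\delta < \tfrac12$, yields $\mathbf{Pr}[S_k \ge r] \le \exp(-\theta^2\mu/3) = \exp\!\big(-\tfrac{r\delta^2}{3(1-\delta)}\big)$; the boundary mean $\mu = r(1-\delta)$ is the worst case, and any smaller $\mu$ only tightens the bound via stochastic domination. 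Finally, since distinct rows involve disjoint entries, the $\{S_k\}$ are independent, so the joint event factorizes and multiplying over the $\tilde{n}_\varepsilon$ rows of $M_\varepsilon$ gives $\mathbf{Pr}[\,\forall k:\ S_k \ge r\,] \le \exp\!\big(-\tfrac{r\,\tilde{n}_\varepsilon\,\delta^2}{3(1-\delta)}\big)$, which is the claimed bound.

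I expect the main obstacle to be making the Step-two obstruction fully rigorous, rather than the probabilistic tail bound, which is routine. The delicate points are (i) confirming the alternative matrix $M'$ is genuinely admissible for the constraint set $\mathcal{S}$, i.e.\ that choosing $W_k'$ outside a measure-zero set keeps its rank within the algorithm's rank budget $r^* \ge r$ (so that $M, M' \in \mathcal{S}$ simultaneously), and (ii) phrasing the indistinguishability so it covers \emph{every} popularity-agnostic algorithm uniformly through the Lagrangian characterization, including how the relaxation $\mathcal{S} \supseteq \hat{S}$ is handled.
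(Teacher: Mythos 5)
Your proposal is correct and follows essentially the same route as the paper's own proof: reduce recovery to the necessary event that every one of the $\tilde{n}_\varepsilon$ vectors of the low-sample sub-matrix receives at least $r$ observations (via the degrees-of-freedom obstruction on a single under-sampled row), bound each per-row probability by a multiplicative Chernoff bound on a Bernoulli sum with mean $N\varepsilon \leq r(1-\delta)$, and multiply over the independent rows. If anything, your parameterization $\theta = \delta/(1-\delta) \in (0,1)$ is cleaner than the paper's, which relabels $\delta$ mid-proof (``take $1/2 < \delta < 1$ such that $N \leq \delta r/\varepsilon$'') yet arrives at the same bound, and your closing caveats about admissibility of $M'$ in $\mathcal{S}$ and uniformity over popularity-agnostic algorithms flag exactly the points the paper also leaves informal.
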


\begin{proof}
Let $\psi$ be any one of the $\Tilde{n}_\varepsilon$ vectors in the low-probability sub-matrix $M_\varepsilon$ ($\psi$ is of length $n$ or $m$, depending on if $M_\varepsilon$ is a block-wise row or column).  Let $X_\psi$ be a random variable denoting the number of observations in corresponding to $\psi$ under block-wise Bernoulli sampling. Since $X_\psi$ is the sum of independent Bernoulli variables, we have $\mathbb{E}[X_\psi] = N\varepsilon$ by linearity of expectation. Furthermore, we require at least $r$ observations for each row and column in $M$ in order to achieve exact recovery at rank $r$. In order to see this directly, assume that an oracle completes all the embeddings except the row or column in question. Then, each observations defines immediately removes only one degree of freedom, since it defines the inner product with a known vector. It will be impossible to complete the final row or column with less than $r$ observations because popularity-agnostic algorithms provide no further constraints beyond a low-rank structure. Given that we need $r$ observations per vector, we can see that $\mathbb{E}[X_\psi] = N\varepsilon < r $. This implies if $N < r/\varepsilon$, we can use the Chernoff tail bound \cite{mitzenmacher2017probability} to bound $\mathbf{Pr}[X_\psi \geq r]$ from above. We take $1/2 < \delta < 1$ such that $N \leq \delta r /\varepsilon $. By application of the Chernoff bound, we have $\mathbf{Pr}[X_\psi \geq r] \leq \exp(-\frac{r}{3}\frac{\delta^2}{1-\delta})$. To complete the proof, notice that our argument extends to each of the $\Tilde{n}_\varepsilon$ vectors in $M_\varepsilon$ independently. Since all of these vectors require $r$ observations in order to complete of $M_\varepsilon$, we obtain the final probability by computing a product over the probability that each of $\Tilde{n}_\varepsilon$ vectors obtains at least $r$ observations.\end{proof}

\vspace{-4pt}
\paragraph{Asymptotic Upper Bound} We also provide a stronger asymptotic lower bound for exact completion, based on the results of \cite{candes2010power,keshavan2010matrix}. This lower bound assumes that the matrix size $n$ increases while keeping the sampling rate constant.  It includes an additional $O(\log n)$ factor, due to the well-known \emph{coupon collector} effect \cite{dawkins1991siobhan}.

Since $M$ is still a block matrix, we assume that asymptotically, each individual block becomes large, while $\Pi$ is held constant. More concretely, we assume each block scales at the same rate as the entire matrix: $n_{ij} = \Theta(n)$ for all $i,j$ and $m_{ij} = \Theta(n)$ for all $i,j$. In principle, we could also support an asymptotic number of blocks as well, as long as the number of blocks grows slowly enough compared to size of each block. Other numerical constants, such as the condition number and incoherence, are taken to be non-asymptotic. Note that we do not require the block additivity assumption for this to hold.

\begin{theorem}
\label{thm:neg2}
Let $M$ be a target matix following the block-wise Bernoulli sampling model. Let $\varepsilon$ be the minimum marginal sampling rate. Suppose $N = o(\frac{r}{\varepsilon}n \log n) $. Then any popularity-agnostic algorithm will have arbitrarily small probability of recovery, asymptotically.
\end{theorem}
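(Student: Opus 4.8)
The plan is to localize the failure to the least-sampled block and then amplify the finite-sample obstruction of the preceding non-asymptotic theorem into an asymptotic one via a coupon-collector argument. Assume without loss of generality that the minimum marginal rate $\varepsilon$ is attained on the rows, so that the low-sample sub-matrix $M_\varepsilon \in \mathbb{R}^{\tilde n_\varepsilon \times m}$ collects the $\tilde n_\varepsilon$ rows of the worst block-row. First I would give the standard oracle reduction: reveal to the algorithm every entry of $M$ lying outside $M_\varepsilon$. Since a popularity-agnostic algorithm only ever imposes the rank-$r$ constraint (by its Lagrangian characterization in the definition above), recovering $M$ from $\mathbf\Omega$ is then no easier than recovering the rows of $M_\varepsilon$ from their own observed entries subject to that rank constraint. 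It therefore suffices to show that, with probability tending to one, at least one row of $M_\varepsilon$ is under-determined.

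Next I would quantify ``under-determined.'' Exactly as in the oracle argument of the non-asymptotic bound, a row sharing only its observed coordinates with an otherwise-determined rank-$r$ matrix loses one degree of freedom per observation, so a row with fewer than $r$ observed entries admits a positive-dimensional affine family of rank-$r$ completions and cannot be recovered. Under block-wise Bernoulli sampling each row's observation count is a sum of independent Bernoulli variables with mean $N\varepsilon$, and because distinct rows involve disjoint entries these counts are mutually independent across the $\tilde n_\varepsilon = \Theta(n)$ rows. The heart of the proof is then a coupon-collector estimate: writing $p_{\mathrm{bad}}$ for the probability that a single row receives fewer than $r$ observations, independence gives $\mathbf{Pr}[\text{every row has} \ge r] = \prod_i(1-p_{\mathrm{bad}}) \le \exp(-\tilde n_\varepsilon\, p_{\mathrm{bad}})$, which tends to zero as soon as $\tilde n_\varepsilon\, p_{\mathrm{bad}} \to \infty$. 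A lower-tail Chernoff bound on each Poisson-like count shows this holds precisely when $N = o(\tfrac{r}{\varepsilon}n\log n)$, the extra $\log n$ being the coupon-collector cost of simultaneously covering all $\Theta(n)$ rows. Equivalently, one may restrict to a uniformly sampled sub-block of $M_\varepsilon$ at rate $\varepsilon$ and invoke the asymptotic information-theoretic lower bounds of \cite{candes2010power,keshavan2010matrix} under the $\mu$-incoherence assumption, whose own $\log n$ originates from the same phenomenon \cite{dawkins1991siobhan}.

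Finally I would assemble the pieces: conditioned on the high-probability event that some row of $M_\varepsilon$ carries fewer than $r$ observations, the impossibility statement above shows no popularity-agnostic algorithm outputs $M$ exactly, so letting this event have probability $1-o(1)$ yields $\mathbf{Pr}[M = \mathcal{A}(\mathbf\Omega)] = o(1)$ as claimed. I expect the main obstacle to be the intermediate regime $\tfrac{r}{\varepsilon} \lesssim N \lesssim \tfrac{r}{\varepsilon}n\log n$, where a typical row already collects far more than $r$ observations, so the crude Chernoff product of the non-asymptotic theorem no longer decays; here one genuinely needs the sharper lower-tail control on the \emph{minimum} count over the $\Theta(n)$ rows, together with a careful translation of the uniform-sampling references to the non-uniform block-Bernoulli model (restricting to the worst block and using rank additivity so that the block's effective rank is the relevant $r$).
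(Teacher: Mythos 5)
Your skeleton matches the paper's (very terse) proof: localize to the low-sample sub-matrix $M_\varepsilon$, note that every row there needs $r$ observations, attribute the $\log n$ to the coupon-collector effect, and lean on the asymptotic lower bounds of \cite{candes2010power,keshavan2010matrix}. But the self-contained Chernoff argument you present as the heart of the proof cannot reach the claimed threshold, and this is a genuine gap rather than a technicality. With per-row mean $N\varepsilon$ (your reading, consistent with the paper's definition of $\varepsilon$ and with $\mathbb{E}[X_\psi]=N\varepsilon$ in the non-asymptotic theorem), the product bound $\mathbf{Pr}[\text{every row has} \ge r] \le \exp(-\tilde n_\varepsilon p_{\mathrm{bad}})$ decays only while $p_{\mathrm{bad}} = \omega(1/\tilde n_\varepsilon)$, and since $p_{\mathrm{bad}} \approx e^{-\Theta(N\varepsilon)}$ once $N\varepsilon \gg r$, the row-degree obstruction dies as soon as $N\varepsilon \gtrsim C(r+\log n)$. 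So your argument establishes impossibility only for $N = o\bigl((r+\log n)/\varepsilon\bigr)$, short of the claimed $N = o(\frac{r}{\varepsilon} n \log n)$ by a factor of order $n$ (and at least a factor of order $r$ under any charitable renormalization of $\varepsilon$). You correctly flag the intermediate regime, but the remedy you propose --- ``sharper lower-tail control on the minimum count'' --- cannot work: in that regime the minimum row count over the $\Theta(n)$ rows exceeds $r$ with probability $1-o(1)$, so there is no under-determined row at all. What kills recovery there is identifiability --- the existence of many incoherent rank-$r$ matrices agreeing on the sample --- which is the actual content of the cited lower bounds and is not obtainable from per-row degree counting; the $r$ in $\Theta(\mu r n \log n)$ enters through that counting argument, not through coupon collecting.

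The fix is to promote your ``equivalently'' aside to the main argument, which is exactly what the paper does: restrict to $M_\varepsilon$, observe that its effective sampling budget is down-weighted by $\varepsilon$, and import the $\Theta(r n \log n)$ necessity for that $\Theta(n)\times\Theta(n)$ sub-matrix wholesale, with the $r$-observations-per-row remark serving only as intuition. If you take that route, two points your sketch glosses over still need care: the cited results assume uniform sampling, whereas within $M_\varepsilon$ the sampling is still non-uniform across its column blocks (only the worst marginal rate is pinned at $\varepsilon$), so the reduction is not purely formal; and the relevant rank of $M_\varepsilon$ need not be $r$ --- note the paper explicitly does \emph{not} assume rank additivity for this theorem, so your appeal to it is off-script and, more importantly, unnecessary if the bound is applied at the rank realized on the sub-matrix. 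Finally, beware the normalization you inherited: the paper's proof speaks of the sample count of the whole sub-matrix ``concentrating around $N\varepsilon$'' while its definition makes $N\varepsilon$ a per-row mean; keeping the per-row and per-block budgets straight is precisely what separates your (too weak) threshold from the one claimed.
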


\begin{proof}
Order $\Theta(nr \log n)$ observations are necessary for exact completion at a given probability in the asymptotic setting \cite{candes2010power,keshavan2010matrix}. This is because  $\Theta(nr \log n)$ observations are necessary to obtain $r$ observations per row and column. Each vector in the low-sample sub-matrix also requires $r$ observations. Since the number of samples in the low-sample sub-matrix concentrates around $N\varepsilon$, this number must be order $\Theta(rn \log n)$ in order to have a chance of reconstruction.\end{proof}

It is instructive to understand why UD embeddings fail to recover rank-$r$ matrix $M$ under popularity skew. For argument's sake, let  $d$ denote a potential uniform embedding dimension. Suppose we have $\Theta(rn\log n)$ samples and we set UD to $r$: $d \gets r$. When sampling is skewed, $M^{(2)}$ will be too sparsely covered to reveal $r$ degrees of freedom, since it only generates $\epsilon$ fraction of the observations. Thus, the $r$-dimensional embeddings would over-fit the under-constrained $M^{(2)}$ block as a result. Alternatively, if we set $d \gets \epsilon r$, as so to match the sample size over sub-matrix $M^{(2)}$, then our $\epsilon r$-dimensional embeddings would be unable to fit the larger training sample over the $M^{(1)}$ block. Namely, we would now have too many samples coming from a rank-$r$ matrix, resulting in an over-constrained problem that is infeasible with $\epsilon r$-dimensional embeddings. By using MD embeddings, we can avoid this problem by simultaneously fitting popular and rare blocks.

\subsubsection{Completion Guarantees for Mixed Dimension Embeddings}

In \cite{sun2016guaranteed} it was shown that various non-convex optimization algorithms, including SGD, could exactly complete the unknown matrix, under the Bernoulli sampling model. For convenience, the theorem is reproduced below. The details of the SGD implementation, such as step sizes and other parameters, can be found in \cite{sun2016guaranteed}.

\begin{theorem} (Sun and Luo, 2016)
\label{thm:sun2016}
Let $\alpha = \frac{n}{m} \geq 1$, $\kappa$ be the condition number of $M$ and $\mu$ be the incoherence of $M$. If (expected) number of samples $N \geq  C_0nr\kappa^2\alpha(\max\{\mu \log n, \sqrt{\alpha}\mu^2r^6\kappa^4\})$ then SGD completes $M$ with probability greater than $1 - \frac{2}{n^4}$.
\end{theorem}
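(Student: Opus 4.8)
The plan is to reduce exact completion to a nonconvex optimization over the factors and invoke the machinery of \cite{sun2016guaranteed}; since this statement is a verbatim restatement of their result, I would reconstruct the argument rather than re-derive the sampling bound from scratch. Write $M = U\Sigma V^\top$ and parametrize candidate solutions by factors $X \in \mathbb{R}^{n \times r}$ and $Y \in \mathbb{R}^{m \times r}$, minimizing the sampled objective $\tfrac{1}{2}\|\mathcal{P}_\Omega(XY^\top - M)\|_F^2$ augmented with a balancing/incoherence regularizer $G(X,Y)$ that penalizes both $\|X^\top X - Y^\top Y\|_F^2$ and row norms exceeding the incoherence budget $\mu r / n$. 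The key point is that, although the raw objective is nonconvex, the regularized objective enjoys favorable local geometry in a neighborhood of the ground truth $(X^\star, Y^\star)$ (up to the orthogonal ambiguity $X \mapsto XQ$, $Y \mapsto YQ$), and SGD can be shown both to enter and to remain in that neighborhood.

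First I would handle \textbf{initialization}. Under Bernoulli sampling with rate $p = N/(nm)$, the rescaled observation operator satisfies $\tfrac{1}{p}\mathcal{P}_\Omega(M) \approx M$ in spectral norm with high probability; this is a matrix-Bernstein concentration argument of exactly the type used in \cite{candes2010power,keshavan2010matrix}. A rank-$r$ spectral truncation of $\tfrac{1}{p}\mathcal{P}_\Omega(M)$ then lands within a constant-fraction relative-error ball of $M$, and the sample lower bound $N \gtrsim nr\kappa^2\alpha\mu\log n$ is precisely what is needed to make the spectral gap survive the sampling noise and to place the initializer inside the basin of attraction. The $\log n$ factor here is the coupon-collector cost already flagged in the discussion preceding Thm.~\ref{thm:neg2}.

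Next I would establish the \textbf{local regularity condition}: over the set of incoherent, approximately balanced factor pairs near $(X^\star, Y^\star)$, the gradient of the regularized loss is strongly aligned with the direction toward the optimum and the loss is smooth, so each stochastic step contracts the distance to the truth geometrically. The crucial technical input is a \emph{uniform} concentration statement: the sampled gradient must approximate the full gradient not at a single point but simultaneously over the entire nonconvex incoherent manifold. This is where I expect the \textbf{main obstacle} to lie, and where the high-degree factor $\sqrt{\alpha}\mu^2 r^6\kappa^4$ in the sample complexity originates --- it is the price of an $\varepsilon$-net / union-bound argument over the factor ball combined with restricted-isometry-style deviation bounds for $\mathcal{P}_\Omega$. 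The regularizer $G$ is essential precisely because it keeps every SGD iterate incoherent, so that these pointwise concentration bounds remain valid along the whole trajectory rather than only at the ground truth.

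Finally, I would assemble the pieces: initialization lands in the basin with high probability, the regularity condition forces geometric error decay of SGD under the prescribed step sizes, and the iterates stay incoherent by virtue of $G$. Driving the error to zero yields exact recovery, and a single union bound over the initialization and concentration events gives the stated $1 - 2/n^4$ success probability, the $n^{-4}$ tail being a direct consequence of the Bernstein and net failure probabilities tuned to this order. Because the explicit constants and step-size schedule are delicate, I would defer those details to \cite{sun2016guaranteed} exactly as the paper does.
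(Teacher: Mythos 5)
The paper does not prove this statement at all: it is reproduced verbatim from Sun and Luo \cite{sun2016guaranteed}, with the step sizes and all proof details explicitly deferred to that reference --- which is exactly what your final paragraph does, so your treatment matches the paper's. Your sketch is moreover a faithful high-level reconstruction of the cited proof (regularized nonconvex factorization, spectral initialization via matrix-Bernstein concentration, a local regularity condition with uniform concentration over the incoherent factor set, and a union bound producing the $1 - 2/n^4$ tail); the only quibble is that Sun--Luo's regularizer penalizes oversized row norms only, whereas the balancing term $\|X^\top X - Y^\top Y\|_F^2$ you include originates in later Procrustes-flow-style analyses and is not part of their argument.
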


We can use Thm \ref{thm:sun2016} and Alg. \ref{alg:blockwise_mixd_fact} to construct a guarantee for mixed-dimension block-wise factorization, as follows.

\begin{corollary}
\label{thm:complete_guarantee}
Let $M$ be a target matrix following the block-wise Bernoulli sampling model.
Let $C_0$ be a universal constant, $\hat{n}_{ij} = \min\{n_i,m_j\}$ and $N^*_{ij} = C_0\Pi_{ij}^{-1}\hat{n}_{ij}r_{ij}\kappa_{ij}^2\alpha_{ij}(\max\{\mu_{ij} \log \hat{n}_{ij}, \sqrt{\alpha_{ij}}\mu_{ij}^2r_{ij}^6\kappa_{ij}^4\})$ where $\kappa_{ij}$ and $\mu_{ij}$ is the condition number and the incoherence of the $ij$-th block of $M$ and $\alpha_{ij} = \frac{\max\{n_{ij},m_{ij}\}}{\min\{n_{ij},m_{ij}\}} \geq 1$. If $N \geq \max_{ij} N_{ij}^*$, then block-wise MD factorization completes rank additive block matrix $M$ with probability greater than $1 - \sum_{ij}\frac{2}{\hat{n}_{ij}^4}$.
\end{corollary}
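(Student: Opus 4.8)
The plan is to reduce this block-wise guarantee to a per-block application of Theorem \ref{thm:sun2016} and then stitch the block estimates together using rank additivity. The first step is to identify the sampling distribution induced on each block. Under the block-wise Bernoulli model, every entry of block $M^{(ij)}$ is revealed independently with the \emph{same} probability $N\Pi_{ij}/(n_im_j)$; hence the observations restricted to block $(i,j)$ are exactly a uniform Bernoulli sample of that block whose expected cardinality is $N\Pi_{ij}$. This is precisely the sampling model under which Theorem \ref{thm:sun2016} operates, so I can apply it to each block in isolation, treating $M^{(ij)}$ as a standalone rank-$r_{ij}$ matrix with condition number $\kappa_{ij}$ and incoherence $\mu_{ij}$.

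Next I would verify the per-block sample threshold. Theorem \ref{thm:sun2016} applied to $M^{(ij)}$ requires the expected number of in-block observations to exceed $C_0\hat n_{ij}r_{ij}\kappa_{ij}^2\alpha_{ij}(\max\{\mu_{ij}\log\hat n_{ij},\sqrt{\alpha_{ij}}\mu_{ij}^2r_{ij}^6\kappa_{ij}^4\})$, which by definition equals $\Pi_{ij}N_{ij}^*$. Since the expected in-block count is $N\Pi_{ij}$, the requirement rearranges to $N\geq N_{ij}^*$, and taking $N\geq\max_{ij}N_{ij}^*$ satisfies every per-block threshold simultaneously. Theorem \ref{thm:sun2016} then guarantees that the SGD factorization in Algorithm \ref{alg:blockwise_mixd_fact} recovers each $M^{(ij)}$ exactly with probability at least $1-2/\hat n_{ij}^4$ (weakening from the larger dimension to $\hat n_{ij}=\min\{n_i,m_j\}$ only makes this bound more conservative). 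A union bound over the $k_Wk_V$ blocks then yields the stated probability $1-\sum_{ij}2/\hat n_{ij}^4$ that all blocks are recovered at once.

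Finally I would argue that simultaneous exact recovery of every block yields exact recovery of $M$ and that the assembled factors $\bar W,\bar V$ together with the projections $P_W^{(i)},P_V^{(j)}$ constitute a genuine rank-$r$ factorization. The construction in Algorithm \ref{alg:blockwise_mixd_fact} places each block factor $W^{(ij)},V^{(ij)}$ into the correct coordinate range so that the ``sliding and interleaving'' projections make $(\bar W^{(i)}P_W^{(i)})(\bar V^{(j)}P_V^{(j)})^T$ reproduce exactly $W^{(ij)}V^{(ij)T}=M^{(ij)}$ in block position $(i,j)$, as illustrated in the two- and four-block examples. Here I invoke rank additivity (via Proposition \ref{prop:block_add}): it ensures the concatenated factors span a space of dimension exactly $r=\sum_{ij}r_{ij}$, so that the assembled layer is a legitimate rank-$r$ representation rather than over-counting shared directions.

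I expect the main obstacle to be this reduction step rather than the probabilistic bookkeeping: one must check that the conditional sampling within a block is genuinely uniform Bernoulli at rate $N\Pi_{ij}/(n_im_j)$ so that Theorem \ref{thm:sun2016} applies verbatim, and that the correct per-block quantities $\mu_{ij},\kappa_{ij},\alpha_{ij}$ are fed into it. Once the reduction is clean, the union bound and the final assembly via rank additivity are routine.
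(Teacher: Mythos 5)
Your proposal is correct and follows essentially the same route as the paper's proof: apply Thm.~\ref{thm:sun2016} to each block independently under the induced per-block uniform Bernoulli sampling, union-bound the failure probabilities, and assemble the block factors into $\bar{W},\bar{V}$ via Alg.~\ref{alg:blockwise_mixd_fact} so that per-block recovery implies $\bar{W}\bar{V}^T = M$ under rank additivity. If anything, your write-up is more careful than the paper's terse proof, since you explicitly verify the reduction (in-block sampling rate $N\Pi_{ij}/(n_im_j)$, the rearrangement $N\Pi_{ij}\geq \Pi_{ij}N_{ij}^*$, and the conservative weakening of the probability bound to $\hat{n}_{ij}$), all of which the paper leaves implicit.
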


\begin{proof}
Recall the construction used in Alg. \ref{alg:blockwise_mixd_fact}. First, we complete each block individually. We apply Thm \ref{thm:sun2016} to each block independently to guarantee its completion at rank $r_{ij}$ with probability at least $1-\frac{2}{n_{ij}^4}$. We then use the block-wise embeddings to construct MD embeddings $\bar{W}, \bar{V}$ as described  Alg. \ref{alg:blockwise_mixd_fact}. If $W^{(ij)}V^{(ij)T} = M^{(ij)}$, for all $i,j$, then $\bar{W}\bar{V}^T = M$. Thus, we need only a union bound on the failure probabilities from Thm \ref{thm:sun2016} to complete the proof.
\end{proof}

Note that Corollary \ref{thm:complete_guarantee} implies Thm \ref{thm:mix} as it is a non-asymptotic version. Namely, recall that $n_{ij} = \Theta(n)$ for all $i,j$. Furthermore, letting $C = C_0\max_{ij}{( \Pi_{ij}^{-1}r_{ij}\kappa_{ij}^2\alpha_{ij}\mu_{ij})}$ recovers Corollary \ref{thm:mix}.

\subsection{Memory-Limited Regime}
Now, we turn our attention to the allocation implications of non-uniformly sampled test sets. To abstract away training, we assume an oracle reveals the target matrix. Recall that our challenge is a small parameter budget --- our embeddings can only use $B$ parameters. The question is what dimensions $d_w$ and $d_v$ each embedding block should get to minimize our popularity-weighted reconstruction loss (under MSE)? Before proceeding, we pause to define some useful matrices from the block-wise MD factorization (Alg. \ref{alg:blockwise_mixd_fact}).

\begin{definition}
\emph{
In the context of \textbf{block-wise MD factorization} (Alg. \ref{alg:blockwise_mixd_fact}) we refer to the matrices $(W^{(ij)},V^{(ij)})$ as the  $ij$-th \textbf{block-wise embeddings}. We refer to matrix $\bar{W}^{(i)}$ as the $i$-th \textbf{row embedding block} and matrix $\bar{V}^{(j)}$ as the  $j$-th \textbf{column embedding block}.}
\end{definition}

Note that, generally speaking, embedding tables $(W,V)$, naturally inherit an \emph{embedding block} structure based on the block structure of $M$. For example, standard UD embeddings partition such that the top block-wise row $M^{(1)} = [M^{(11)},...,M^{(1k_W)}]$ only depends on $W^{(1)}$. Thus, embedding blocks exist independent of the usage of block-wise MD factorization. On the other hand, \emph{block-wise embeddings} $(W^{(ij)},V^{(ij)})$, are a distinct byproduct of block-wise MD factorization.

\subsubsection{Optimization over Embedding Dimensions}

We assume the block structure and block-wise probability matrix is given --- the variables over which the optimization takes place are 1) the dimensions of the embedding blocks, $(d_w,d_v)$ such that $W^{(l)} \in \mathbb{R}^{n_l \times (d_w)_l}$ and $V^{(l)} \in \mathbb{R}^{n_l \times (d_v)_l}$ and 2) the embedding blocks themselves $W^{(i)}$ for $1 \leq i \leq k_W$ and $V^{(j)}$ for $1 \leq j \leq k_V$. Note that when the embedding block dimensions are uniform, such that $(d_w)_i = (d_v)_j$ for all $i$ and $j$, this is equivalent to direct optimization over embedding matrices $W,V$ (i.e. matrix factorization). Recall that $L_\Pi$ is the popularity-weighted MSE. When $d_w$ and $d_v$ are treated as integers, this optimization is NP-Hard in general, since even integral feasibility under linear constraints is known to be NP-Hard \cite{karp1972reducibility}. Instead, we study a continuous relaxation that results in a convex program. The resultant convex program is far simpler, and yields a closed-form solution given the spectrum of the target matrix. We proceed to define another quantity of interest for our discussion, a \emph{spectral (singular value) decay}. In order to save space in the main text, we do not introduce the spectral decay rule $g$ but we imply it when referring to the spectrum directly. After the upcoming definition, we restate and prove Thm. \ref{thm:opt_sol} from the main text.

\begin{definition} \emph{
A \textbf{spectral decay} is mapping from $[0,r]$ to $\mathbb{R}^{+}$ that describes the singular value scree plot for a matrix. Let $\sigma_k$ be the $k$-th singular value of a matrix and $\sigma_k \ge \sigma_{k+1} $ for $k=1,...,r$. For any singular value spectrum we associate a spectral decay rule, a piece-wise step-function and its functional inverse, as $g(x) = \sigma_k$ for $k-1 \leq x < k$ and $g^{-1}(x) = k$ for $\sigma_{k} < x \leq \sigma_{k+1}$, respectively.}
\end{definition}

\begin{theorem}
The optimal block-wise embedding dimensions for the convex relaxation of the variable dimension embedding optimization under a parameter budget are given by
$d_{ij}^* = g_{ij}^{-1}\left( \sqrt{\lambda (n_i + m_j)(n_im_j)\Pi_{ij}^{-1}} \right)$
where $g_{ij}^{-1}$ is the functional inverse of the spectral decay of block $M^{(ij)}$.

\end{theorem}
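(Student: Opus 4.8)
The plan is to exploit the block-wise structure to decouple the inner minimization, apply Eckart--Young--Mirsky per block, relax the integer constraints to a continuous convex program, and then solve via Lagrangian stationarity.

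First I would fix the block-wise dimensions $\{d_{ij}\}$ and carry out the inner minimization over $W,V$. In the block-wise MD factorization of Alg.~\ref{alg:blockwise_mixd_fact}, each block $M^{(ij)}$ is approximated by an \emph{independent} rank-$d_{ij}$ factor $W^{(ij)}(V^{(ij)})^\top$, and the row/column embedding blocks are assembled purely by concatenation, so the blocks do not interact. Since $L_\Pi$ is a positively weighted sum of per-block Frobenius errors $\frac{\Pi_{ij}}{n_i m_j}\|M^{(ij)} - W^{(ij)}(V^{(ij)})^\top\|_F^2$, the inner minimum decouples over blocks, and by Eckart--Young--Mirsky the best rank-$d_{ij}$ approximation of $M^{(ij)}$ is its truncated SVD, with residual equal to the tail sum of squared singular values $\sum_{k > d_{ij}}\sigma_{ij}(k)^2$.

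Next I would rewrite the budget in terms of the block-wise dimensions. Rank additivity gives $(d_w)_i = \sum_j d_{ij}$ and $(d_v)_j = \sum_i d_{ij}$, so $\sum_i n_i (d_w)_i + \sum_j m_j (d_v)_j = \sum_{ij}(n_i + m_j)d_{ij}$, which is linear in the $d_{ij}$ and reproduces the constraint $\sum_{ij}(n_i+m_j)d_{ij}=B$. The reduced objective is $\sum_{ij}\frac{\Pi_{ij}}{n_i m_j}\sum_{k>d_{ij}}\sigma_{ij}(k)^2$. Treating $d_{ij}$ as continuous, I replace each tail sum by $E_{ij}(d_{ij}) = \int_{d_{ij}}^{r_{ij}} g_{ij}(x)^2\,dx$, where $g_{ij}$ is the spectral decay rule of $M^{(ij)}$ (the object whose inverse appears as $\sigma_{ij}^{-1}=g_{ij}^{-1}$). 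Because the singular values are non-increasing, $g_{ij}$ is non-increasing, so $E_{ij}'(d)=-g_{ij}(d)^2$ and $E_{ij}''(d)=-2g_{ij}(d)g_{ij}'(d)\geq 0$; each $E_{ij}$ is therefore convex, and with positive weights and a linear constraint the relaxed program is convex, making KKT stationarity both necessary and sufficient. Forming $\mathcal{L} = \sum_{ij}\frac{\Pi_{ij}}{n_i m_j}E_{ij}(d_{ij}) + \lambda\bigl(\sum_{ij}(n_i+m_j)d_{ij} - B\bigr)$ and setting $\partial\mathcal{L}/\partial d_{ij}=0$ gives $\frac{\Pi_{ij}}{n_i m_j}g_{ij}(d_{ij})^2 = \lambda(n_i+m_j)$, i.e. $g_{ij}(d_{ij}^*) = \sqrt{\lambda(n_i+m_j)(n_i m_j)\Pi_{ij}^{-1}}$; inverting $g_{ij}$ yields the claimed formula, with $\lambda$ pinned down by the budget equality $\sum_{ij}(n_i+m_j)d_{ij}^*=B$.

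The main obstacle I anticipate is rigorously justifying the decoupling of the inner minimization: a generic factorization $WV^\top$ couples the blocks through the shared row and column factors, so it is precisely the concatenation structure of Alg.~\ref{alg:blockwise_mixd_fact} together with rank additivity that licenses treating each block independently and invoking Eckart--Young block-by-block. A secondary technical point is the passage from the discrete tail sums to the integral relaxation, and ensuring $g_{ij}^{-1}$ is well-defined at the Lagrange-determined threshold --- handling flat stretches of the spectrum where the inverse is set-valued, and clipping $d_{ij}^*$ to $[0,r_{ij}]$ when the stationarity value falls outside the spectral range.
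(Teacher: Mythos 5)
Your proposal is correct and follows essentially the same route as the paper's proof: decouple the inner minimization block-by-block via the concatenation structure of Alg.~4 and rank additivity, apply the low-rank (Eckart--Young--Mirsky) approximation theorem to reduce each block to a tail sum of squared singular values, pass to the integral form via the spectral decay rule $g_{ij}$, argue convexity from the monotonicity of $g_{ij}^2$, and solve the Lagrangian first-order conditions to obtain $g_{ij}^2(d_{ij}^*) = \lambda(n_i+m_j)(n_i m_j)\Pi_{ij}^{-1}$ before inverting. Your closing remarks about flat stretches of the spectrum and clipping $d_{ij}^*$ to $[0, r_{ij}]$ are refinements the paper leaves implicit, not a divergence in method.
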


\begin{proof}
The optimization is formulated as
$$\min_{d_w,d_v}\min_{W,V} L_{\Pi}(M,WV^T)$$ $$\text{s.t. } \sum_i n_i(d_w)_i + \sum_j m_j(d_v)_j \leq B$$

Under relaxation, we treat this a continuous optimization. Let $(k,l)$ be a test coordinate sampled according to $\Pi$. If rank additivity holds, we can equivalently write 

$$= \min_{d} \min_{W,V} \mathbb{E}_{(k,l) \in [n] \times [m]}  |M_{kl} - W_kV_l^T |^2$$ 

$$\text{ st } \sum_{ij} (n_i+m_j)d_{ij} \leq B$$

where $M_{kl}$ is the $kl$-th element of $M$, $(d_w)_i = \sum_{j}d_{ij}$ and $(d_v)_j = \sum_{i}d_{ij}$. The $d_w,d_v$ refer to the embedding block dimensions, whereas the $d_{ij}$ refer to the block-wise embedding dimensions (Definition C.11). We may ignore the parameters in the projections since they are not free parameters (and also contribute a negligible amount of parameters to the total count). Under Bernoulli sampling model, our popularity distribution yields.  As shorthand notation, let $\mathfrak{B} := \sum_{ij} (n_i+m_j)d_{ij}$.
\vspace{-2pt}

$$= \min_{d} \min_{W,V} \sum_{ij}  \frac{\Pi_{ij}}{n_im_j}    ||M^{(ij)} - W^{(ij)}V^{(ij)^T}||_F^2 \text{ st } \mathfrak{B} \leq B$$ %$$\text{st} \sum_{ij} (n_i+m_j)d_{ij} \leq B$$
\vspace{-2pt}

Since the constraints remain the same, we omit them for the time being. Letting $\sigma_k^{(ij)}$ be the singular values of block $M^{(ij)}$ and using the low-rank approximation theorem \cite{markovsky2008structured} we obtain
\vspace{-2pt}

$$= \min_{d} \sum_{ij} \frac{\Pi_{ij}}{n_im_j} \sum_{k=d_{ij}+1}^{r_{ij}} (\sigma_k^{(ij)})^2 \text{ st } \mathfrak{B} \leq B$$ %$$\text{st} \sum_{ij} (n_i+m_j)d_{ij} \leq B$$
\vspace{-2pt}

Letting  $g_{ij}$ be the spectral decay rule for each block and noticing that by construction $\sum_{k=0}^{r} \sigma_k = \int_0^{r} g(k) dk$ we obtain
\vspace{-2pt}

$$= \min_{d} \sum_{ij} \frac{\Pi_{ij}}{n_im_j}  \left( \int_0^{r_{ij}} g_{ij}^2(k) dk -  \int_0^{d_{ij}} g_{ij}^2(k) dk \right) \text{ st } \mathfrak{B} \leq B$$  %$$\text{st} \sum_{ij} (n_i+m_j)d_{ij} \leq B$$
\vspace{-2pt}
$$= \min_{d} \sum_{ij} \frac{\Pi_{ij}}{n_im_j}  \left( ||M^{(ij)}||_F^2 -  \int_0^{d_{ij}} g_{ij}^2(k) dk \right) \text{ st } \mathfrak{B} \leq B$$  %$$\text{st} \sum_{ij} (n_i+m_j)d_{ij} \leq B$$
\vspace{-2pt}

Observe that the objective is convex. To see this, note that each $g$ is decreasing since the spectral decay is decreasing. Thus, $g^2$ is also decreasing. The negative integral of a decreasing function is convex. Finally, since the objective is a sum of functions that are convex along one variable and constant along the rest, the entire optimization is convex (and well-posed under the linear constraint, which is guaranteed to be active). Thus we can solve with the optimization with first-order conditions \cite{luenberger1984linear}. The corresponding Lagrangian can be written as

$$ \mathcal{L} = \sum_{ij}  \frac{\Pi_{ij}}{n_im_j} \left( ||M^{(ij)}||_F^2 -  \int_0^{d_{ij}} g_{ij}^2(k) dk \right) $$ $$+ \lambda\left(-B + \sum_{ij}(n_i + m_j)d_{ij}\right)$$

Note that $M^{(ij)}$ does not depend on $d_{ij}$.  Also, note that we can use the fundamental theorem of calculus $\frac{\partial}{\partial x} \int_{0}^{x} g(t) dt = g(x) $ \cite{courant2012introduction}. Then, using Lagrange multipliers \cite{luenberger1984linear} we can write

$$\frac{\partial}{\partial d_{ij}} L_{\Pi}(M,WV^T) = - \frac{\Pi_{ij}}{n_i m_j}  g_{ij}^2(d_{ij}) + \lambda(n_i + m_j)$$
\vspace{-4pt}

Finally, using first order conditions  $\nabla_{d_{ij}} = [\frac{\partial}{\partial d_{ij}}] = 0$ we obtain: $g_{ij}^2(d_{ij}) = \lambda(n_i + m_j)(n_im_j)\Pi_{ij}^{-1}$. Solving for $d_{ij}$ by taking the functional inverse of $g_{ij}$ completes the proof. We conclude:  $$ d^*_{ij} =  g_{ij}^{-1}\left( \sqrt{\lambda (n_i + m_j)(n_im_j)\Pi_{ij}^{-1}} \right)$$\end{proof}

For specific spectral decay rules, we may give closed-form solutions, as done in the main text for power law decay. We can also analyze the performance gap between uniform and MD embeddings with respect to the optimization objective.

\begin{corollary}
\label{cor:perf_gap_emb}
The performance gap compared to UD embeddings is
$$\sum_{ij}\Pi_{ij} ( \mathbf{1}\{d_{ij}^{*}> \frac{B}{n+m}\} (\sum_{k=\frac{B}{n+m}}^{d^*_{ij}}(\sigma_k^{(ij)})^2)$$ $$ - \mathbf{1}\{d_{ij}^{*}< \frac{B}{n+m}\}(\sum_{k=d^*_{ij}}^{\frac{B}{n+m}}(\sigma_k^{(ij)})^2))
$$
\end{corollary}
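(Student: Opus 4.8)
The plan is to compute the performance gap directly as the difference between the popularity-weighted reconstruction error of the uniform-dimension (UD) layer and that of the optimal mixed-dimension (MD) layer, both evaluated under the same parameter budget $B$. First I would pin down the UD dimension: since a UD layer assigns the same dimension $\bar d$ to every block, the budget constraint $\sum_i n_i \bar d + \sum_j m_j \bar d = (n+m)\bar d = B$ forces $\bar d = \tfrac{B}{n+m}$. This is exactly the threshold appearing inside the indicator functions, so the two regimes $d_{ij}^{*} > \tfrac{B}{n+m}$ and $d_{ij}^{*} < \tfrac{B}{n+m}$ single out the blocks on which MD allocates, respectively, more or fewer dimensions than UD.

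Next I would write both objectives in the closed form derived in the proof of Theorem \ref{thm:opt_sol}. By rank additivity the popularity-weighted MSE decomposes over blocks, and by the low-rank approximation (Eckart--Young) theorem \cite{markovsky2008structured} each block factored at dimension $d$ contributes reconstruction error equal to the tail sum of its squared singular values, namely $\tfrac{\Pi_{ij}}{n_i m_j}\sum_{k=d+1}^{r_{ij}}(\sigma_k^{(ij)})^2$. Evaluating this at $d=\tfrac{B}{n+m}$ for the UD layer and at $d=d_{ij}^{*}$ for the MD layer and subtracting, the full tail sums $\sum_{k=\,\cdot\,+1}^{r_{ij}}$ cancel block-by-block, leaving only the partial sum of squared singular values between $\tfrac{B}{n+m}$ and $d_{ij}^{*}$ for each block.

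The final step is the sign bookkeeping. On blocks with $d_{ij}^{*} > \tfrac{B}{n+m}$ the MD layer uses a higher dimension, so the residual singular values $\sum_{k=B/(n+m)+1}^{d_{ij}^{*}}(\sigma_k^{(ij)})^2$ are removed from the error and enter the gap with a positive sign; on blocks with $d_{ij}^{*} < \tfrac{B}{n+m}$ the MD layer truncates more aggressively, and the corresponding values $\sum_{k=d_{ij}^{*}+1}^{B/(n+m)}(\sigma_k^{(ij)})^2$ enter with a negative sign. Summing over $ij$ and collecting the two indicator-weighted contributions reproduces the stated expression. I do not anticipate a genuine obstacle here: the only thing requiring care is the index arithmetic on the partial singular-value sums, together with the convention that $d_{ij}^{*}$ and $\tfrac{B}{n+m}$ may be non-integral under the continuous relaxation, so these tail sums should be read through the spectral decay rule $g_{ij}$ exactly as in the proof of Theorem \ref{thm:opt_sol} rather than as literal integer sums.
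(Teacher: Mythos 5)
Your proposal is correct and takes essentially the same route as the paper, whose proof is the one-line remark that the result ``follows directly from plugging optimal $d^*$ into the objective'': you simply make that computation explicit (UD dimension $\frac{B}{n+m}$ from the budget constraint, the Eckart--Young block-wise tail sums from the proof of Theorem~\ref{thm:opt_sol}, cancellation of the common tails, and the sign split via the two indicators), and your caveat about reading non-integral dimensions through the spectral decay rule $g_{ij}$ is the right one. The only mismatch is cosmetic and lies in the paper, not in your argument: carried through from the objective, each term should bear the weight $\Pi_{ij}/(n_i m_j)$ rather than the bare $\Pi_{ij}$ appearing in the corollary's statement (and the paper's summation limits have the same off-by-one looseness you flag).
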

\begin{proof}
Follows directly from plugging optimal $d^*$ into the objective. 
\end{proof}

We can explain the intuition for Corollary \ref{cor:perf_gap_emb} as follows. The first term counts the spectral mass gained back by allocating more parameters to frequent embeddings. $B /(n+m)$ is the embedding dimension under a uniform parameter allotment. When $d_{ij}^*$ is greater than this, we are increasing the dimension which enables that embedding to recover more of the spectrum. This occurs when $\Pi_{ij}$ is large. On the other hand, the trade-off is that lower-dimension embeddings recover less of the spectrum when $\Pi_{ij}$ is small, which is the penalty incurred by the second term.

\begin{corollary}
When $M$ exhibits a block-wise power spectral decay, this becomes:
\vspace{-5pt}
$$d_{ij}^* =  \lambda\zeta_{ij}\Pi_{ij}^{\frac{1}{2\beta}} $$

where $\zeta_{ij} = \left(\frac{(n_i+m_j)(n_im_j)}{\mu}\right)^{\frac{-1}{2\beta}}$ and $\lambda = \left(\frac{B}{\sum_{ij}(n_i+m_j)\zeta_{ij}}\right)^{-2\beta} $
\end{corollary}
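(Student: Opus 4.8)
The plan is to specialize the optimal-dimension formula of Theorem~\ref{thm:opt_sol}, namely $d_{ij}^* = g_{ij}^{-1}\!\big(\sqrt{\lambda(n_i+m_j)(n_im_j)\Pi_{ij}^{-1}}\big)$, to the case where every block's spectral decay rule is a power law, and then to fix the single remaining scalar by enforcing the parameter budget. The only block-specific input required is the functional inverse of a power-law decay, so the first step is to compute it explicitly. Writing the decay of block $M^{(ij)}$ as $g_{ij}(k) = \rho k^{-\beta}$ (using $\sigma(k)=\rho k^{-\beta}$ from the main text, and assuming a common spectral norm $\rho$ across blocks), the inverse is $g_{ij}^{-1}(y) = (\rho/y)^{1/\beta} = \rho^{1/\beta}\,y^{-1/\beta}$.

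Next I would substitute the argument $y = \sqrt{\lambda(n_i+m_j)(n_im_j)\Pi_{ij}^{-1}}$ into $g_{ij}^{-1}$. The square root contributes a factor of $\tfrac12$ in the exponent, which composes with the $-\tfrac1\beta$ of the inverse power law to produce the characteristic exponent $-\tfrac{1}{2\beta}$ on $(n_i+m_j)(n_im_j)$ and, since $\Pi_{ij}$ enters as $\Pi_{ij}^{-1}$, the exponent $+\tfrac{1}{2\beta}$ on $\Pi_{ij}$. Collecting the prefactor $\rho^{1/\beta} = (\rho^2)^{1/(2\beta)} = \mu^{1/(2\beta)}$ with $(n_i+m_j)(n_im_j)$ yields exactly $\zeta_{ij} = \big(\tfrac{(n_i+m_j)(n_im_j)}{\mu}\big)^{-1/(2\beta)}$, identifying the constant $\mu$ as the (common) squared spectral norm $\rho^2$. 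What remains is a single scalar, built from the Lagrange multiplier, multiplying $\zeta_{ij}\Pi_{ij}^{1/(2\beta)}$, giving the claimed proportionality $d_{ij}^* \propto \zeta_{ij}\Pi_{ij}^{1/(2\beta)}$ and matching the $\epsilon^{1/(2\beta)}$ scaling of the two-block corollary.

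Finally I would pin down the scalar by imposing the budget, using the fact established in the proof of Theorem~\ref{thm:opt_sol} that the linear constraint is active at the optimum, i.e. $\sum_{ij}(n_i+m_j)d_{ij}^* = B$. Substituting the expression for $d_{ij}^*$ reduces this to a single scalar equation, and solving it for $\lambda$ produces the stated closed form.

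The main obstacle is the exponent bookkeeping rather than any conceptual difficulty: one must compose the square root with the fractional power $-1/\beta$ correctly and track how the Lagrange multiplier is reparametrized (it appears raised to $-1/(2\beta)$ once the inverse decay is applied), so that the scalar solved from the budget carries the stated $-2\beta$ power. A secondary point worth stating explicitly is the assumption that all blocks share a common decay exponent $\beta$ and spectral norm $\rho$; without it, $\zeta_{ij}$ would retain block-dependent $\rho_{ij}$ and $\beta_{ij}$, and the single constant $\mu$ could not be factored out uniformly.
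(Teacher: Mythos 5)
Your proposal is correct and takes exactly the paper's route: the paper's entire proof is ``follows directly by substituting the power spectral decay rule for $\sigma(\cdot)$'' into Theorem~\ref{thm:opt_sol}, and you simply carry out that substitution in detail, including the exponent composition $(-1/\beta)\cdot(1/2) = -1/(2\beta)$, the identification $\mu = \rho^2$, and fixing the scalar via the active budget constraint $\sum_{ij}(n_i+m_j)d_{ij}^* = B$. One small point in your favor: your budget normalization correctly retains the $\Pi_{ij}^{1/(2\beta)}$ factor inside the sum (and the multiplier enters the $d_{ij}^*$ formula as $\lambda^{-1/(2\beta)}$), whereas the corollary as printed drops that factor and writes the scalar inconsistently --- an apparent typo in the statement that your derivation implicitly corrects.
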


\begin{proof}
Follows directly by substituting power spectral decay rule for $\sigma(\cdot)$. 
\end{proof}

\subsubsection{Approximation Gap for Convex Relaxation}

Note that we can bound the approximation gap of the proposed relaxation by simply rounding down each $d_{ij}$ to the nearest integer, which ensures the feasibility of the assignment. The absolute approximation error is then less than $\sum_{ij} \Pi_{ij} \cdot g^2(d_{ij})$. For most applications, this quantity is small for a good MD assignment, since either the probability term, $\Pi_{ij}$ is small, or the the spectrum at $d_{ij}$ is small. For example, in typical use cases, the embedding dimensions may be on the order of $10-100$ -- rounding down to the nearest integer would thus represent a loss of $10-1\%$ of the spectral mass.

\end{document}